\documentclass[lettersize,journal]{IEEEtran}
\usepackage{amsmath,amsfonts}
\usepackage{algorithm}
\usepackage{array}
\usepackage[caption=false,font=normalsize,labelfont=sf,textfont=sf]{subfig}
\usepackage{textcomp}

\usepackage{stfloats}
\usepackage{url}
\usepackage{verbatim}
\usepackage{booktabs}
\usepackage{multirow}
\usepackage{multicol}
\usepackage{tikz}
\usepackage{comment}

\usetikzlibrary{shapes.geometric}

\usepackage{graphicx}
\usepackage{cite}
\def\BibTeX{{\rm B\kern-.05em{\sc i\kern-.025em b}\kern-.08em
    T\kern-.1667em\lower.7ex\hbox{E}\kern-.125emX}}
\usepackage{balance}
\usepackage{algpseudocode}
\newcommand{\indep}{\rotatebox[origin=c]{90}{$\models$}}

\usepackage{caption}
\usepackage{amsthm}
\newtheorem{definition}{Definition}
\newtheorem{remark}{Remark}  
\newtheorem{example}{Example}
\newtheorem{proposition}{Proposition}
\newtheorem{theorem}{Theorem}
\newtheorem{corollary}{Corollary}

\usepackage{amssymb}

\newtheorem{lemma}{Lemma}

\begin{document}
\title{Decomposition for Bayesian Networks: Local and Parallel Inference}
\author{Pei Heng, Xinyi Hu, and Yi Sun 
\thanks{\noindent\scriptsize$\bullet$ Pei Heng is with the School of Mathematics and Statistics and KLAS, Northeast Normal University, Changchun, China. (Email: peiheng@nenu.edu.cn)

\scriptsize$\bullet$  Xinyi Hu is with the College of Mathematics and System Sciences, Xinjiang University, Urumqi, China. (Email: 107552303491@stu.xju.edu.cn)

\scriptsize$\bullet$  Yi Sun is with the Institute of Statistics and Data Science, Xinjiang University of Finance and Economics, Urumqi, China. (Email: brian@xjufe.edu.cn)

 
\scriptsize$\bullet$ Yi Sun is the corresponding author.
}}

\markboth{IEEE Transactions on Pattern Analysis and Machine Intelligence, ~Vol.~XX, No.~XX, 2026}%
{Heng \MakeLowercase{\textit{et al.}}: Decomposition for Bayesian Networks: Local and Parallel Inference}

\maketitle

\begin{abstract}

Probabilistic inference in high-dimensional Bayesian networks is difficult because exact manipulation of the joint distribution scales exponentially with network size. We propose a decomposition framework based on directed convex subgraphs and introduce a minimal d-decomposition tree. Together, they provide a principled alternative to classical junction-tree constructions. The proposed framework represents the joint distribution by lower-dimensional sub-models that can be learned and stored separately. This decomposition reduces computational cost and naturally enables parallel computation. Based on a minimal d-decomposition tree, we further develop two parallel algorithms for parameter estimation and probabilistic inference. Experiments show that the proposed method substantially improves computational efficiency over junction-tree methods while maintaining inference accuracy, especially for low-dimensional queries.
 
\end{abstract}

\begin{IEEEkeywords}
Bayesian network, Directed acyclic graph, Decomposition, Directed convex subgraph.
\end{IEEEkeywords}

\section{Introduction}

Bayesian networks (BNs) are probabilistic graphical models that represent conditional dependence structures among random variables by directed acyclic graphs (DAGs) \cite{2009Probabilistic}. This modelling framework provides a principled and interpretable theoretical basis for representing complex dependencies in high-dimensional data \cite{pearl2014probabilistic}. It also offers a mathematically transparent framework for probabilistic representation and inference under uncertainty. Consequently, they have been widely applied in data analysis and related engineering domains, including machine learning \cite{maxwell1997efficient,friedman2003being}, biomedical engineering \cite{pang2004computerized,roberts2006bayesian}, decision support systems \cite{janssens2006integrating,weber2008component}, and reliability analysis \cite{mi2012reliability,langseth2007applications}.

Despite these advantages, learning and inference remain challenging in high-dimensional BNs\cite{2009Probabilistic}. High-dimensional settings are often characterised by data sparsity, complex non-linear dependencies among variables, and substantial computational costs. These factors make direct estimation of the joint distribution difficult and call for more tractable methods.

In this work, we address the problem of decomposing a Bayesian network into low-dimensional sub-networks to facilitate efficient inference. Formally, let $\mathcal{M}(\mathcal{G}) = (\mathcal{G}, \mathcal{P}(\mathcal{G}))$ be a Bayesian network, where $\mathcal{G}=(V,E)$ is the underlying DAG and each $P(X_V)\in \mathcal{M}(\mathcal{G})$ satisfies the Markov property encoded by
$\mathcal{G}$. By introducing a directed decomposer $S$, the network can be partitioned into two sub-networks, $\mathcal{M}(\mathcal{G}_{A\cup S})$ and $\mathcal{M}(\mathcal{G}_{B\cup S})$, with $A, B, S \subseteq V$ being pairwise disjoint. This decomposition can be applied recursively until no further splitting is possible, yielding a collection of low-dimensional sub-networks. Computing and storing statistics on these sub-networks then allows subsequent inference and related tasks to be carried out more efficiently. This decomposition principle is analogous to atom decomposition \cite{berry2010introduction} in undirected graphical models, where the decomposer $S$ corresponds to a clique minimal separator. Such atom decompositions have been widely used for sampling \cite{carvalho2007simulation,wang2010simulation} and iterative proportional fitting \cite{xu2011improved}. These applications illustrate the computational value of structured subnetwork decomposition.

{\bf Related work}. Because decomposition is central to scalable learning and inference in high-dimensional BNs, many studies have developed decomposition-based methods. An early line of work is Pearl’s \cite{pearl2014probabilistic} belief propagation algorithm [15], which builds a junction tree and calibrates it by message passing. This structure provides an effective mechanism for caching intermediate computation results. Subsequently, Wu \cite{wu2007lossless} proposed a lossless decomposition method for BNs based on a divide-and-conquer strategy. This approach first performs global parameter learning and then factorizes the potentials on junction tree separators, distributing them to adjacent tree nodes. It was further shown that the conditional independences encoded in the original BN are fully preserved in the decomposed sub-networks. Despite their effectiveness, both approaches still follow a global paradigm. 
After decomposition, the subnetwork distributions still depend on globally learned quantities for initialization. As a result, these approaches do not achieve strictly local modelling or fully independent inference.

Other studies instead focus on the structure and statistical properties of the sub-models themselves. Kim and Kim \cite{kim2006divide} proposed the splitter decomposition method, which adds extra edges within the decomposed sub-models to ensure consistency of marginal distributions, allowing the sub-models to be parametrized directly from local data. However, this strategy inevitably increases the structural complexity of the sub-models, leading to higher data storage and computational costs. The underlying reason for such “edge-adding” operations is that DAGs are not closed under marginalization \cite{richardson2002ancestral,evans2016graphs}, which highlights the inherent difficulty of decomposition in BNs.

To address these limitations, Li and Guo \cite{li2013decomposition} proposed decomposing a BN using complete d-separators and showed that the resulting sub-models naturally satisfy conditional independence constraints. Their work identified a structural mechanism in BNs analogous to atom decomposition \cite{berry2010introduction} in undirected graphical models, allowing sub-models to perform independent parameter learning and storage based solely on local data, while remaining usable in subsequent inference. However, their method requires the d-decomposers to satisfy a completeness condition, which is not necessary for decomposability, thereby limiting its practical applicability.

{\bf Collapsibility}. We next examine the decomposed sub-models from the perspective of collapsibility. Collapsibility means that the inference results obtained from a sub-model are identical to the marginalization results of the original model over the same set of variables. This property is a necessary condition for the validity of decomposition. Xie and Geng \cite{xie2009collapsibility} provided a formal definition of collapsibility for BNs. Given a BN $\mathcal{M}(\mathcal{G})$ and a variable set $A$, the model $\mathcal{M}(\mathcal{G})$ is said to be collapsible onto $A$ if
\[
\mathcal{M}(\mathcal{G}, A) = \mathcal{M}(\mathcal{G}_A),
\]
where $\mathcal{M}(\mathcal{G}, A)$ denotes the marginal model of $\mathcal{M}(\mathcal{G})$ over $A$, and $\mathcal{M}(\mathcal{G}_A)$ denotes the sub-model constructed from the induced subgraph $\mathcal{G}_A$.

If the decomposer is chosen so that the resulting sub-models are collapsible, parameter estimation can be carried out on smaller sub-models without losing consistency with the full model. More recently, Heng et al. \cite{heng2026structural} further characterised collapsibility from a graph-theoretic perspective. They showed that a sub-model $\mathcal{M}(\mathcal{G}_A)$ is collapsible if the subgraph is a \emph{directed convex subgraph} in $\mathcal{G}$; for details of directed convex subgraphs, see Section~\ref{sec-3-con}.

{\bf Main contributions}. Our first contribution is a graph-theoretic characterization of d-decomposers: a subset is a d-decomposer if and only if it is a directed convex d-separator. Based on this observation, we propose a decomposition framework for BNs, in which the resulting subgraphs naturally satisfy collapsibility. Theorem~\ref{thm-1} gives the induced factorization of the joint distribution and clarifies the statistical role of the resulting sub-models.

We further develop an efficient decomposition algorithm and organize the resulting subgraphs into a minimal d-decomposition tree, where each d-decomposer corresponds to a d-convex minimal d-separator. Building on this structure, we develop two algorithms that leverage a minimal d-decomposition tree for parallel parameter estimation and probabilistic inference in high-dimensional networks. In the experimental section, we conduct large-scale simulations on discrete and Gaussian Bayesian networks. The results show that our method is substantially more efficient than existing junction-tree-based approaches in both parameter estimation and inference, especially for low-dimensional queries.

The remainder of this paper is organised as follows. Section~\ref{sec-2} introduces the necessary notation and background. Section \ref{sec-3} introduces directed decomposition for Bayesian networks and establishes its basic properties. It also constructs a minimal d-decomposition tree and presents a pruning rule for inference. Section~\ref{sec-4} describes two algorithms for parallel parameter learning and probabilistic inference using a minimal d-decomposition tree. Section~\ref{sec-5} evaluates the performance of a minimal d-decomposition tree-based method against standard approaches in parameter estimation and inference through empirical experiments. Finally, Section~\ref{sec-6} concludes the paper and provides a brief discussion.

\section{Preliminaries}\label{sec-2}

We begin with the notation and definitions used throughout the paper.

\subsection{Basic Terminology}

A directed acyclic graph (DAG) is denoted by $\mathcal{G} = (V, E)$, where $V = \{v_1,v_2, \ldots, v_n\}$ is the vertex set and $E$ is the directed edge set. A directed edge $(u, v) \in E$ indicates that $u$ is a parent of $v$ and that $v$ is a child of $u$. The sets of parents and children of a vertex $u$ in $\mathcal{G}$ are denoted by $pa_{\mathcal{G}}(u)$ and $ch_{\mathcal{G}}(u)$, respectively. The family of a vertex $v$, denoted by $fa_{\mathcal{G}}(v)$, is defined as the union of $pa_{\mathcal{G}}(v)$ and $\{v\}$. Two vertices $u$ and $v$ are said to be adjacent, written $u \sim v$, if there is an edge between them. A subgraph $\mathcal{G}_A = (A, E_A)$ of $\mathcal{G}$ is defined as the induced subgraph with vertex set $A \subseteq V$ and edge set $E_A = E \cap (A \times A)$.

A path from $u$ to $v$, denoted by $l_{uv}$, is a sequence of  $(u = v_0, e_0, v_1, e_1, \ldots, v_{k-1}, e_{k-1}, v_k = v)$ in which each edge $e_j$ connects $v_j$ and $v_{j+1}$ for $j = 0, 1, \ldots, k-1$. If all edges are oriented as $(v_j, v_{j+1}) \in E$, then $l_{uv}$ is called a directed path. In this case, $u$ is called an ancestor of $v$, and the set of all ancestors of $v$ in $\mathcal{G}$ is denoted by $an_{\mathcal{G}}(v)$. 

The set of all internal vertices on the path $l_{uv}$, excluding the endpoints $u$ and $v$, is denoted by $V^o(l_{uv}) = V(l_{uv}) \setminus \{u, v\}$. 
An internal vertex $w$ on $l_{uv}$ is a collider if the two edges incident to $w$ both point toward $w$, i.e.,  $\{x \rightarrow w \leftarrow y\} \subseteq l_{uv}$. Otherwise, $w$ is a non-collider on $l_{uv}$. The sets of all collider and non-collider vertices on the path $l_{uv}$ are denoted by $V^c(l_{uv})$ and $V^{nc}(l_{uv})$, respectively.

 For a subset $A \subseteq V$, we define:

\begin{itemize}
	\item \textbf{Parent set}: $ pa_\mathcal{G}(A) = \bigcup_{v \in A} pa_\mathcal{G}(v)  \backslash A $.  
	\item \textbf{Child set}: $ ch_\mathcal{G}(A) = \bigcup_{v \in A} ch_\mathcal{G}(v) \backslash A $.  
	\item \textbf{Ancestral set}: $ An_\mathcal{G}(A) = \left( \bigcup_{v \in A} an_\mathcal{G}(v) \right) \cup A $.  
	\item \textbf{Markov boundary}: 
	\[
	mb_\mathcal{G}(A) = \left( 
	pa_\mathcal{G}(A) \cup ch_\mathcal{G}(A) \cup \bigcup_{w \in ch_\mathcal{G}(A)} pa_{\mathcal{G}}(w) 
	\right) \setminus A.
	\]
	
\end{itemize}

\begin{table*}[t]
\centering
\caption{Notation Reference Sheet }
\begin{tabular}{lp{0.7\textwidth}}
\hline
\textbf{Notation} & \textbf{Context-Specific Meaning} \\
\hline
$pa_\mathcal{G}(v), ch_\mathcal{G}(v), fa_\mathcal{G}(v), an_\mathcal{G}(v)$ & Parent, child, family ($fa_\mathcal{G}(v) = pa_\mathcal{G}(v) \cup \{v\}$), and ancestor sets of vertex $v$ in DAG $\mathcal{G}$ \\
$u \sim v$ & Vertices $u$ and $v$ are adjacent \\
$l_{uv}$ & Path connecting vertices $u$ and $v$ \\
$V^o(l_{uv}), V^c(l_{uv}), V^{nc}(l_{uv})$ & Internal vertices on path $l_{uv}$, with $V^c(l_{uv})$ being colliders and $V^{nc}(l_{uv})$ non-colliders \\
$\mathcal{G}_A = (A, E_A)$ & Induced subgraph on vertex subset $A \subseteq V$ \\
$pa_\mathcal{G}(A), ch_\mathcal{G}(A), An_\mathcal{G}(A), mb_\mathcal{G}(A)$ & Parent set, child set, ancestral set, and Markov boundary of vertex subset $A \subseteq V$ \\
$X \indep Y \mid Z [\mathcal{G}]$ & $X$ and $Y$ are d-separated by $Z$ in DAG $\mathcal{G}$ \\
$\mathcal{G}^m$ & Moral graph of $\mathcal{G}$ \\
$I(\mathcal{G}), I(\mathcal{G})_A$ & Independence model induced by DAG $\mathcal{G}$ and its marginal on $A \subseteq V$ \\
$\mathcal{M}(\mathcal{G}), \mathcal{M}(\mathcal{G}, A)$ & Bayesian network and its marginal over $A \subseteq V$ \\
\hline
\end{tabular}
\label{tab-notation}
\end{table*}

 \subsection{Independence Model and Marginal Independence Model}

For pairwise disjoint subsets $X, Y, Z \subseteq V$, we say that $X$ is d-separated from $Y$ given $Z$, denoted by $X \indep Y \mid Z \ [\mathcal{G}]$, if for every path $l_{xy}$ between any $x \in X$ and $y \in Y$, at least one of the following conditions holds:
\begin{itemize}
	\item $V^{nc}(l_{xy}) \cap Z \neq \emptyset$,  
	\item $V^{c}(l_{xy}) \cap An_{\mathcal{G}}(Z) = \emptyset$.  
\end{itemize}

In this case, $Z$ is a d-separator of $X$ and $Y$. It is a minimal d-separator if no proper subset $Z' \subsetneq Z$ d-separates $X$ and $Y$. The moral graph of $\mathcal{G}$, denoted by $\mathcal{G}^m$, is the undirected graph formed by first adding an undirected edge between each pair of non-adjacent parents that share a common child, and then replacing every directed edge with an undirected edge.

Using d-separation, we define the \emph{independence model} induced by $\mathcal{G}$ as
\[
\begin{split}
I(\mathcal{G}) = \Big\{ (X,Y,Z) \,\Big|\, & X \indep Y \mid Z \, [\mathcal{G}], \\
& X, Y, Z \subseteq V \text{ are pairwise disjoint} \Big\}.
\end{split}
\]
For a subset $A \subseteq V$, the marginal independence model on $A$, denoted by $I(\mathcal{G})_A$, is obtained by projecting $I(\mathcal{G})$ onto $A$:
\[
I(\mathcal{G})_A = \Big\{ (X,Y,Z) \in I(\mathcal{G}) \,\Big|\,  \;
 X, Y, Z \subseteq A   \Big\}.
\]
In general, \( I(\mathcal{G}_A) \neq I(\mathcal{G})_A \), since the class of DAGs is not closed under marginalization. Nevertheless, the inclusion $I(\mathcal{G})_A \subseteq I(\mathcal{G}_A)$ always holds.

\subsection{Bayesian Network and Marginal Distribution Model}

Let $X_V$ be a random vector indexed by $V$. Its state space is $\mathcal{X}_V \equiv \otimes_{v \in V} \mathcal{X}_v$, where each $\mathcal{X}_v$ is either a finite-dimensional Euclidean space or a finite discrete set. For a subset $A \subseteq V$, $X_A$ denotes the sub-vector of $X_V$ indexed by $A$, with range $\mathcal{X}_A \equiv \otimes_{v \in A} \mathcal{X}_v$. A probability distribution $P$ on $X_V$ is \emph{compatible} with $\mathcal{G}$ or equivalently factorizes according to $\mathcal{G}$, if 
\begin{equation}\label{eq-2}
	P(X_V) = \prod_{v \in V} P\left(X_v \mid X_{pa_{\mathcal{G}}(v)}\right).
\end{equation} 
That is, the conditional distribution of each $X_v$ depends only on its parents in $G$.

For a distribution $P$, we use $X_A \indep X_B \mid X_C [P]$ to indicate that $X_A$ is conditionally independent of $X_B$ given $X_C$ under $P$. If $P$ is compatible with $\mathcal{G}$, then for pairwise disjoint $A,B,C\subseteq V$, the graph separation $A \indep B \mid C [\mathcal{G}]$ implies $X_A \indep X_B \mid X_C [P]$.

A Bayesian network  $\mathcal{M}(\mathcal{G})$ is the set of probability distributions that are Markov with respect to the DAG $\mathcal{G}$. For a BN $\mathcal{M}(\mathcal{G})$ and a subset $A\subseteq V$, define the marginal model on $A$ by
$$
\mathcal{M}(\mathcal{G},A)=\{P_A(x_A)\mid P_A(x_A)=\textstyle\int_{\mathcal{X}_{V\backslash A}} dP(x),\, P\in \mathcal{M}(\mathcal{G})\}.
$$
We say that $\mathcal{M}(\mathcal{G})$ is \emph{collapsible} onto $A$ if and only if $\mathcal{M}(\mathcal{G}_A)=\mathcal{M}(\mathcal{G},A)$.  For ease of reference, Table~\ref{tab-notation} presents a list of commonly used symbols, systematically summarizing the main notations employed throughout the paper.

\section{Decomposition of Bayesian Networks}\label{sec-3}

\subsection{Directed Convex Subgraph}\label{sec-3-con}

For the decomposition to be valid, each resulting subgraph $\mathcal{G}_A$ must satisfy $\mathcal{M}(\mathcal{G}_A) = \mathcal{M}(\mathcal{G},A)$. That is, $\mathcal{M}(\mathcal{G})$ must be collapsible onto $A$. To reconstruct the joint distribution consistently, the intersections of decomposed subgraphs, namely the d-decomposers, must also be collapsible. Although verifying collapsibility is generally non-trivial, Heng et al. \cite{heng2026structural} showed that if a subgraph $\mathcal{G}_A$ is directed convex, then $\mathcal{M}(\mathcal{G})$ is collapsible onto $A \subseteq V$. For completeness, the formal definition of d-convex subgraphs is described as follows:

\begin{definition}[d-convex, \cite{heng2026structural}]\label{def-d-convexity}
	Let $\mathcal{G}=(V,E)$ be a DAG and let $A\subseteq V$. The induced subgraph $\mathcal{G}_A$ is \textbf{directed convex} (d-convex for short) if, for every pair of non-adjacent vertices $u$ and $v$ in $A$, there is no path $l_{uv}$ between $u$ and $v$ such that
	$
	A \cap V^o(l_{uv}) \subseteq V^c(l_{uv}) \subseteq An_{\mathcal{G}}(\{u,v\}).
	$
\end{definition}

Any path $l_{xy}$ satisfying the condition in Definition~\ref{def-d-convexity} is called an inducing path, or information path, over $A$ between $x$ and $y$. The term “information path” emphasizes that, if such a path exists between two non-adjacent vertices $x$ and $y$ in $A$, then no subset of $A \setminus \{x,y\}$ can d-separate them \cite{2000Cau}. Consequently, computing the marginal distribution on $A$ using $\mathcal{G}_A$ would introduce spurious conditional independencies. A simple example of a d-convex subgraph is provided below.

\begin{figure*}[!t]
    \centering
    \begin{minipage}[t]{0.3\textwidth}
    \vspace{0pt}
        \centering
        \begin{tikzpicture}[scale=0.8, transform shape,
            every node/.style={circle, draw=black, minimum size=7mm, inner sep=0pt, font=\large}]
            \node[fill=lightgray] (c) at (0,0.3){$c$};
            \node[fill=lightgray] (d) at (0,-1){d};
            \node[fill=lightgray] (i) at (2,-1){$i$};
            \node[fill=lightgray] (g) at (1,-2.2){$g$};
            \node[fill=lightgray] (s) at (2.2,-2.2){$s$};
            \node[fill=lightgray] (l) at (1,-3.5){$l$};
            \node[fill=lightgray] (j) at (2.2,-3.5){$j$};
            \node[fill=lightgray] (h) at (0,-3.5){$h$};
            \node[fill=lightgray] (k) at (0,-2.2){$k$};
            \node[fill=lightgray] (m) at (-1.3,-2.2){$m$};
            \draw[->,ultra thick] (c)--(d); \draw[->,ultra thick] (d)--(g);
            \draw[->,ultra thick] (i)--(g); \draw[->,ultra thick] (i)--(s);
            \draw[->,ultra thick] (g)--(l); \draw[->,ultra thick] (g)--(h);
            \draw[->,ultra thick] (l)--(j); \draw[->,ultra thick] (s)--(j);
            \draw[->,ultra thick] (k)--(h); \draw[->,ultra thick] (m)--(k);
        \end{tikzpicture}
        \caption*{(a)}
    \end{minipage}
    \begin{minipage}[t]{0.3\textwidth}
    \vspace{0pt}
        \centering
        \begin{tikzpicture}[scale=0.8]
            \node[scale=1.3,shape=ellipse, fill=lightgray] (a) at (0,-1.5){\scriptsize{$c,d$}};
            \node[scale=1.3,shape=ellipse, fill=lightgray] (b) at (2.5,-1.5){\scriptsize{$g,i,d$}};
            \node[scale=1.3,shape=ellipse, fill=lightgray] (c) at (2.5,-3.2){\scriptsize{$g,i,s$}};
            \node[scale=1.3,shape=ellipse, fill=lightgray] (d) at (2.5,-4.9){\scriptsize{$g,s,l$}};
            \node[scale=1.3,shape=ellipse, fill=lightgray] (f) at (0,-3.2){\scriptsize{$s,l,j$}};
            \node[scale=1.3,shape=ellipse, fill=lightgray] (g) at (0,-4.9){\scriptsize{$g,h,k$}};
            \node[scale=1.3,shape=ellipse, fill=lightgray] (i) at (-2.5,-4.9){\scriptsize{$m,k$}};
            \draw[-,ultra thick] (a)--(b); \draw[-,ultra thick] (b)--(c);
            \draw[-,ultra thick] (c)--(d); \draw[-,ultra thick] (d)--(f);
            \draw[-,ultra thick] (d)--(g); \draw[-,ultra thick] (i)--(g);
            \node at (1.1,-1.2){$\{d\}$};
            \node at (3.1,-4){$\{g,s\}$};
            \node at (1.2,-4.4){$\{g\}$};
            \node at (3.1,-2.4){$\{g,i\}$};
            \node at (1.5,-3.7){$\{s,l\}$};
            \node at (-1.25,-4.4){$\{k\}$};
        \end{tikzpicture}
        \caption*{(b)}
    \end{minipage}
    \hspace{0.02\textwidth}  
    \begin{minipage}[t]{0.3\textwidth}
        \vspace{0pt}
        \centering
        \begin{tikzpicture}[scale=0.8]
            \node[scale=1.3,shape=ellipse,fill=lightgray] (a) at (0,-1.5){\scriptsize{$c,d$}};
            \node[scale=1.3,shape=ellipse, fill=lightgray] (b) at (3.5,-1.5){\scriptsize{$g,i,d$}};
            \node[scale=1.3,shape=ellipse, fill=lightgray] (c) at (3.5,-3.2){\scriptsize{$g,i,s,j,l$}};
            \node[scale=1.3,shape=ellipse, fill=lightgray] (d) at (0,-3.2){\scriptsize{$g,h,k$}};
            \node[scale=1.3,shape=ellipse, fill=lightgray] (e) at (0,-4.9){\scriptsize{$m,k$}};
            \draw[-,ultra thick] (a)--(b); \draw[-,ultra thick] (b)--(c);
            \draw[-,ultra thick] (c)--(d); \draw[-,ultra thick] (d)--(e);
            \node at (1.5,-1.2){$\{d\}$};
            \node at (1.5,-2.85){$\{g\}$};
            \node at (4.05,-2.4){$\{g,i\}$};
            \node at (0.35,-4){$\{k\}$};
        \end{tikzpicture}
        \caption*{(c)}
    \end{minipage}

    \caption{(a) A DAG $\mathcal{G}=(V,E)$. (b) A minimal d-separation tree of $\mathcal{G}$; (c) a minimal d-decomposition tree of $\mathcal{G}$, obtained by iteratively merging adjacent nodes whose intersections are non-convex.  See Examples \ref{example-1} and \ref{example-2} for details.}
    \label{fig-1}
\end{figure*}

\begin{example}\label{example-1}
Consider the DAG $\mathcal{G}=(V,E)$ shown in Fig.~\ref{fig-1}(a), and define the vertex sets $$
A_1=\{d,g,l,j\}, \qquad A_2=\{g,i,s,l,j\}.$$

For $A_1$, consider the path
\[
l_{dj}: d \rightarrow g \leftarrow i \rightarrow s \rightarrow j .
\]
Along this path, the set of internal vertices satisfies
\[
V^o(l_{dj}) \cap A_1 = \{g\} \subseteq V^c(l_{dj}),
\]
and all collider vertices belong to the ancestor set of $\{d,j\}$, that is,
\[
V^c(l_{dj}) \subseteq An_{\mathcal{G}}(\{d,j\}).
\]
By Definition~\ref{def-d-convexity}, $l_{dj}$ is therefore an inducing path. This implies that the induced subgraph $\mathcal{G}_{A_1}$ is not d-convex in $\mathcal{G}$.  

Indeed, no subset of $A_1 \setminus \{d,j\} = \{g,l\}$ d-separates d and $j$ in $\mathcal{G}$. However, in the induced subgraph $\mathcal{G}_{A_1}$, the subset $\{l\}$ does d-separate d and $j$. Consequently, the conditional independence relations encoded by $\mathcal{G}$ and $\mathcal{G}_{A_1}$ differ, that is, $I(\mathcal{G})_{A_1} \neq I(\mathcal{G}_{A_1})$.

In contrast, the induced subgraph $\mathcal{G}_{A_2}$ is d-convex in $\mathcal{G}$, since there exists no inducing path in $\mathcal{G}$ connecting any pair of non-adjacent vertices in $A_2$.
\end{example}

In Example~\ref{example-1}, we showed that a non-d-convex subgraph can encode incorrect conditional independencies. The following lemma states an important property of d-convex subgraphs.

\begin{lemma}[\cite{heng2026structural}]\label{lem:xie-geng}
Given a BN $\mathcal{M}(\mathcal{G})$ and a subset $A\subseteq V$, if $\mathcal{G}_A$ is d-convex in $\mathcal{G}$, then we have $\mathcal{M}(\mathcal{G}, A) = \mathcal{M}(\mathcal{G}_A)$.
\end{lemma}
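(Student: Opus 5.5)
We prove the two inclusions $\mathcal{M}(\mathcal{G},A)\subseteq\mathcal{M}(\mathcal{G}_A)$ and $\mathcal{M}(\mathcal{G}_A)\subseteq\mathcal{M}(\mathcal{G},A)$ separately. The first is the substantive direction. The second reduces to building an explicit extension of a distribution on $A$ to one on $V$.

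\textbf{Marginals are Markov to $\mathcal{G}_A$.} Take $P\in\mathcal{M}(\mathcal{G})$. We show that $P_A$ satisfies the local Markov property of $\mathcal{G}_A$: for each $v\in A$, $X_v\indep X_{nd_{\mathcal{G}_A}(v)\setminus pa_{\mathcal{G}_A}(v)}\mid X_{pa_{\mathcal{G}_A}(v)}$. By the global Markov property of $P$, it suffices to prove the corresponding d-separation in $\mathcal{G}$. This is equivalent to showing $I(\mathcal{G}_A)\subseteq I(\mathcal{G})_A$ for these local statements.

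\emph{Main obstacle.} Suppose there is a d-connecting path in $\mathcal{G}$ from $v$ to some $u\in A$ that is a non-descendant in $\mathcal{G}_A$ and non-adjacent to $v$, given $pa_{\mathcal{G}_A}(v)$. We want to extract from it an inducing path over $A$ between two non-adjacent vertices of $A$, contradicting Definition~\ref{def-d-convexity}.

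The plan is to take a shortest such path and cut it at internal vertices lying in $A$. Consider a segment between consecutive $A$-vertices $a,b$ on the path, with all internal vertices outside $A$. Its colliders are ancestors of the conditioning set $pa_{\mathcal{G}_A}(v)\subseteq A$. We need the stronger condition that they are ancestors of $\{a,b\}$. For this we use the standard argument from the ancestral/MAG literature \cite{2000Cau}: replace the path by concatenating directed paths, and choose a minimal counterexample. This yields either an inducing path between non-adjacent $A$-vertices, or a directed path that creates a cycle in $\mathcal{G}_A$ or a violation of the non-descendant assumption.

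Care is needed when the extracted inducing path connects two adjacent $A$-vertices. That case must be handled by showing the adjacency itself lets us shorten the original path, contradicting minimality.

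\textbf{Extensions exist.} Given $Q\in\mathcal{M}(\mathcal{G}_A)$, we must construct $P\in\mathcal{M}(\mathcal{G})$ with $P_A=Q$. A natural first attempt is to make the variables in $V\setminus A$ degenerate or independent, and to set each $v\in A$ to follow $Q(x_v\mid x_{pa_{\mathcal{G}_A}(v)})$ ignoring its parents outside $A$. This $P$ factorizes over $\mathcal{G}$, since dropping parents is allowed. Its $A$-marginal is the product of the $Q$-conditionals, which equals $Q$, because the $V\setminus A$ variables are independent of everything and integrate out.

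Hence this direction holds trivially, without using convexity. Combining both inclusions gives $\mathcal{M}(\mathcal{G},A)=\mathcal{M}(\mathcal{G}_A)$.
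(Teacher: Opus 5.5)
Your skeleton matches the paper's. The reverse inclusion is the paper's argument made concrete. The paper embeds $\mathcal{G}_A$ in $\mathcal{G}'=(V,E_A)$, where $A$ is ancestral, and uses two facts: ancestral margins are collapsible, and deleting edges shrinks the model. That is exactly your ``independent $V\setminus A$, $Q$-conditionals on $A$'' extension, and as you say it needs no convexity. For the forward inclusion, however, the paper does no path surgery. It imports $I(\mathcal{G})_A=I(\mathcal{G}_A)$ for d-convex $A$ as a known fact and concludes via the global Markov property. So all the content lies in the step you call the main obstacle, and your sketch of that step is not yet a proof.

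\textbf{Where the sketch fails.} The surgery you describe does not preserve d-connection, because replacing a segment by an edge can change the edge marks at the segment's endpoints. Take $A=\{x,a,b\}$ with edges $x\to a$, $\ell\to a$, $\ell\to b$, $a\to b$, $x\to b$. Every pair in $A$ is adjacent, so $A$ is d-convex.
\begin{itemize}
\item Given $Z=\{a\}$, the path $x\to a\leftarrow\ell\to b$ is d-connecting.
\item Its segment $a\leftarrow\ell\to b$ is an inducing path between the adjacent vertices $a,b$.
\item Shortening it to $a\to b$ gives $x\to a\to b$, which is blocked at $a\in Z$.
\end{itemize}
The valid shortcut is $x\to b$. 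It comes from a different inducing path, namely $x\to a\leftarrow\ell\to b$ itself, which qualifies only because the $A$-collider $a$ is an ancestor of the endpoint $b$. A correct argument must therefore allow segments whose internal $A$-vertices are colliders ancestral to the segment's endpoints, and must track arrowheads.

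Your deferred case has the same character. A collider in $An_{\mathcal{G}}(Z)\setminus An_{\mathcal{G}}(\{a,b\})$ must be rerouted along a directed path into $Z$. This inserts new $A$-vertices whose collider status and ancestry in $\mathcal{G}_A$ must be checked. The end product of such an argument is a d-connecting path in $\mathcal{G}_A$. Your sketch neither constructs that path nor shows that the three outcomes you list are exhaustive.

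\textbf{How to close the gap.} Use the latent-projection theorem for maximal ancestral graphs \cite{richardson2002ancestral}. The MAG of $\mathcal{G}$ over $A$ has the following properties:
\begin{itemize}
\item $a\sim b$ exactly when there is an inducing path over $A$ between $a$ and $b$;
\item the edge is oriented $a\to b$ whenever $a\in an_{\mathcal{G}}(b)$;
\item its separation model is exactly $I(\mathcal{G})_A$.
\end{itemize}
D-convexity says these adjacencies are exactly those of $\mathcal{G}_A$. Each such pair is joined by an edge of $\mathcal{G}$, so the orientations agree and the MAG is $\mathcal{G}_A$ itself. Hence $I(\mathcal{G})_A=I(\mathcal{G}_A)$ in full, which is precisely the fact the paper's proof uses, and your local Markov reduction then goes through.
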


Lemma~\ref{lem:xie-geng} states that, for a Bayesian network $\mathcal{M}(\mathcal{G})$, if the graph $\mathcal{G}$ can be decomposed into a collection of d-convex subgraphs $A_1, \dots, A_k$ such that 
$$
\mathcal{G} = \mathcal{G}_{A_1} \cup \cdots \cup \mathcal{G}_{A_k},
$$
then any distribution $P \in \mathcal{M}(\mathcal{G})$ can be exactly reconstructed from its marginals over the subgraphs $A_i$. As the conclusion of this lemma plays a central role in the development of the main results, its proof is provided in Supplementary Material~\ref{app_lem:1} for completeness.

The next lemma gives a practical criterion for checking whether a subgraph is d-convex.

\begin{lemma}\label{thm-3.2}  
Let $\mathcal{G}=(V, E)$ be a DAG, let $A = V \backslash B$ for some $B \subseteq V$, and let $u,v \in A$ be two non-adjacent vertices. Then the following statements are equivalent:  
\begin{itemize}  
	\item[(i)] there exists an inducing path between $u$ and $v$;  
	\item[(ii)] there exists a path $\rho_{uv}$ connecting $u$ and $v$ in $(\mathcal{G}_{An_\mathcal{G}(\{u,v\})})^m$ such that $V^o(\rho_{uv}) \subseteq B$.  
\end{itemize}  
\end{lemma}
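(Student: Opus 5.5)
We prove the two implications separately, working throughout in the ancestral set $W=An_{\mathcal{G}}(\{u,v\})$ and its moral graph $(\mathcal{G}_W)^m$. An inducing path between $u$ and $v$ is understood as in Definition~\ref{def-d-convexity}: a path $l_{uv}$ with $A\cap V^o(l_{uv})\subseteq V^c(l_{uv})\subseteq W$. The guiding picture is that every collider of an inducing path gets ``shortcut'' by a moral edge, while its non-colliders are forced to lie in $B$.

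\emph{(i)$\Rightarrow$(ii).} Take an inducing path $l_{uv}$. We first show $V(l_{uv})\subseteq W$. Colliders lie in $W$ by assumption. For a non-collider $w$, follow the path from $w$ in the direction of an outgoing edge; the path stays directed until it reaches either a collider or an endpoint, so $w$ is an ancestor of a collider (hence of $u$ or $v$) or of an endpoint. Hence $l_{uv}$ lies in $\mathcal{G}_W$. Now build $\rho_{uv}$ as follows. The colliders on $l_{uv}$ split it into maximal collider-free segments. At each collider $c$ with path-neighbours $x\to c\leftarrow y$, replace the subpath $x,c,y$ by the moral edge $x-y$; when colliders are consecutive, these shortcuts chain together. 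The resulting sequence is a walk in $(\mathcal{G}_W)^m$ whose internal vertices are exactly the non-colliders of $l_{uv}$ (together with possibly some colliders that get kept at the ends of chains). Take care to avoid keeping any vertex of $A$; the inducing condition gives $A\cap V^{nc}(l_{uv})=\emptyset$, so all retained non-colliders lie in $B$. We then shortcut the walk to a path. In the chained case $x\to c_1\leftarrow z\to c_2\leftarrow y$, the vertex $z$ is a non-collider, so it lies in $B$ and may be kept. We therefore only delete colliders, and the remaining internal vertices are in $B$.

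\emph{(ii)$\Rightarrow$(i).} This is the harder direction and the main obstacle. Each moral edge $a-b$ of $\rho_{uv}$ is either an edge of $\mathcal{G}_W$ or is witnessed by a common child $c\in W$, giving $a\to c\leftarrow b$. Replacing each moral edge accordingly yields a walk in $\mathcal{G}$ in which the added vertices are colliders in $W$, while each original internal vertex, lying in $B$, is a collider or non-collider. This walk is not yet a path, since vertices may repeat. Two approaches are available:
\begin{enumerate}
\item invoke the classical equivalence (Verma--Pearl / Spirtes et al.~\cite{2000Cau}) that an inducing path relative to $B$ exists iff $u,v$ are not d-separated by any $S\subseteq A\setminus\{u,v\}$, and prove that non-separation via the moral-graph criterion of Lauritzen: $u\perp v\mid S$ iff $S$ separates $u,v$ in $(\mathcal{G}_{An(\{u,v\}\cup S)})^m$;
\item build the path directly.
\end{enumerate}
I would first attempt the direct construction. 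Take $S=A\setminus\{u,v\}$ and check that $\rho_{uv}$ avoids $S$ and lies in $(\mathcal{G}_{An(\{u,v\}\cup S)})^m\supseteq(\mathcal{G}_W)^m$. Then $u,v$ are d-connected given $S$, so there is a path $\pi$ that is active given $S$: its non-colliders avoid $S$ (hence lie in $B$), and its colliders lie in $An(S\cup\{u,v\})$. The difficulty is to upgrade this to colliders in $An(\{u,v\})$ with every $A$-vertex a collider. The set $S$ is the wrong conditioning set for this, because active colliders may only be ancestors of $S$. Instead, use the minimality trick from the standard proof: among all walks in $\mathcal{G}$ obtained from moral paths in $(\mathcal{G}_W)^m$ with interior in $B$, choose one minimizing length. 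Then show that any repeated vertex, and any $A$-vertex that is a non-collider, would allow a shorter walk of the same type. This yields a genuine path satisfying $A\cap V^o\subseteq V^c\subseteq W$, i.e.\ an inducing path. Verifying this shortening step in every case is where the work lies.
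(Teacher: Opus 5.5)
Your direction (i)$\Rightarrow$(ii) is correct. It is also more careful than the paper's one-line appeal to moralization, since you verify $V(l_{uv})\subseteq W$. The gap is that (ii)$\Rightarrow$(i) is never completed. Expanding each moral edge $a-b$ of $\rho_{uv}$ into $a\to c\leftarrow b$ is exactly the paper's construction; the paper then asserts the inducing property directly, tacitly treating the expansion as a path. You rightly observe that it is only a walk, but you do not close this. The detour through $S=A\setminus\{u,v\}$ stalls, and the minimal-length argument is only announced (``where the work lies''). So the proposal never produces a genuine path with $A\cap V^o\subseteq V^c\subseteq W$.

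The missing idea is a simple invariant, which has three parts.
\begin{itemize}
\item Every vertex of the expanded walk $\omega$ lies in $W=An_{\mathcal{G}}(\{u,v\})$. Hence vertices of $B$ may have either status, and every collider is automatically in $W$.
\item Every interior occurrence of a vertex of $A$ is an inserted common child, because the interior of $\rho_{uv}$ lies in $B$. So both of its walk-edges point into it.
\item If some $x$ occurs twice, excise the closed sub-walk between its first and last occurrences. All other occurrences keep their incident edges. The merged occurrence of $x$ keeps the edge entering the first occurrence and the edge leaving the last one. For $x\in A\setminus\{u,v\}$ both of these point into $x$, so $x$ stays a collider. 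For $x\in\{u,v\}$ the operation merely truncates the walk at that endpoint. For $x\in B$ nothing is required.
\end{itemize}
The invariant is preserved and the walk strictly shortens, so the process ends in an inducing path. In particular, the case your minimality argument worries about, an $A$-vertex turning into a non-collider, never arises.

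Your option (1) would also close the argument at once, if you quantify over every candidate separator instead of only $A\setminus\{u,v\}$. For each $S\subseteq A\setminus\{u,v\}$ one has $W\subseteq An_{\mathcal{G}}(\{u,v\}\cup S)$. So $\rho_{uv}$ is also a path in $(\mathcal{G}_{An_{\mathcal{G}}(\{u,v\}\cup S)})^m$ avoiding $S$. By Lauritzen's criterion no such $S$ d-separates $u$ and $v$, and the Verma--Pearl/Spirtes equivalence then supplies an inducing path. This route is shorter on the page, but it imports a theorem whose proof contains essentially the walk-to-path argument above. Trying to extract the inducing path by hand from a single d-connecting path given $A\setminus\{u,v\}$, as you did, is the wrong move.

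A small clarification in (i)$\Rightarrow$(ii): two colliders are never adjacent on a path, since the edge between them points into only one of them. So there are no chains. Deleting all colliders of $l_{uv}$ leaves a subsequence of $l_{uv}$, which is already a path in $(\mathcal{G}_W)^m$ with interior $V^{nc}(l_{uv})\subseteq B$. No collider is kept and no further shortcutting is needed.
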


\begin{proof}  
The implication (i) $\implies$ (ii) follows directly from the definition of moralization. Conversely, consider the construction of $(\mathcal{G}_{An_\mathcal{G}(\{u,v\})})^m$. Without loss of generality, let $\rho_{uv} = (u, u_1, u_2, \ldots, u_k, v)$, and suppose that $(u_i, u_{i+1})$ is the only moral edge. Then there exists a vertex $w$ such that $u_i \rightarrow w \leftarrow u_{i+1}$ in $\mathcal{G}$.  

We can thus construct a path  
\[
\ell_{uv} = (u, u_1, u_2, \ldots, u_i, w, u_{i+1}, \ldots, u_k, v).
\]  
By the construction of $(\mathcal{G}_{An_\mathcal{G}(\{u,v\})})^m$, it follows that  
\[
V^o(\ell_{uv}) \cap A \subseteq V^c(\ell_{uv}) \subseteq An_{\mathcal{G}}(\{u,v\}),
\]  
as required.  
\end{proof}

Based on Lemma~\ref{thm-3.2}, we propose the following algorithm for testing d-convexity:

\begin{algorithm}[H]
	\caption{D-convexity testing algorithm \emph{DCTest}$(\mathcal{G}, A)$}
	\label{algorithm-convexity}
	\begin{algorithmic}[1]
		\Require A connected DAG $\mathcal{G} = (V, E)$ and a subset $A \subseteq V$
		\Ensure Whether $A$ is d-convex in $\mathcal{G}$
		\State Initialize \parbox[t]{0.9\linewidth}{
        $\displaystyle
        \mathbb{S} = \big\{ \{u, v\} \mid u, v \in mb_\mathcal{G}(V\backslash A), 
        \text{$u$ and $v$ are}\\ \text{ non-adjacent in $\mathcal{G}$} \big\}
        $
        }
		\ForAll{$\{u,v\} \in \mathbb{S}$}
		\If{there exists a path $l_{uv}$ between $u$ and $v$ in $(\mathcal{G}_{An_\mathcal{G}(\{u,v\})})^m$ such that $V^o(l_{uv}) \subseteq V\backslash A$}
		\State \Return \textbf{False}
		\EndIf
		\EndFor
		\State \Return \textbf{True}
	\end{algorithmic}
\end{algorithm}

\begin{proposition}
The computational complexity of Algorithm~\ref{algorithm-convexity} is $O(m)$, where $m$ denotes the number of edges in $\mathcal{G}$.
\end{proposition}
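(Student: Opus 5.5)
The plan is to avoid the pairwise loop of Algorithm~\ref{algorithm-convexity} in the analysis. Run naively, the loop costs $|\mathbb{S}|$ separate graph searches, which is not $O(m)$. Instead I would show that the test in line 3 can be answered for all pairs of $\mathbb{S}$ at once, by a constant number of linear-time sweeps over $\mathcal{G}$. Throughout, write $B=V\backslash A$ and work with criterion (ii) of Lemma~\ref{thm-3.2}.

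\textbf{Step 1: preprocessing in $O(m)$.} I would compute:
\begin{itemize}
\item a topological order of $\mathcal{G}$;
\item the set $mb_{\mathcal{G}}(B)$, obtained by scanning each edge once and each child's parent list once;
\item the connected components $K_1,\dots,K_r$ of the graph on $B$ whose edges are the skeleton edges of $\mathcal{G}_B$ together with the moral edges between two parents of a common child.
\end{itemize}
The last item is the delicate one for linear time. I would not materialise the moral cliques. Instead, in a union--find pass, each child $w$ links its parents in $B$ to one another (and to $w$ if $w\in B$). This uses $|pa_{\mathcal{G}}(w)|$ operations per child, hence $O(m)$ in total up to the inverse-Ackermann factor. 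That factor can be removed with a plain BFS over an auxiliary bipartite "vertex--child" graph, which has $O(m)$ edges.

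\textbf{Step 2: reduce a path $\rho_{uv}$ to a single component.} A path with $V^o(\rho_{uv})\subseteq B$ has its interior inside one component $K_j$. Its endpoints $u,v$ are then attached to $K_j$, either by an edge or by a shared child. So the set of candidate pairs splits by component: a pair in $\mathbb{S}$ violates d-convexity only if both vertices lie in the attachment set $N(K_j)\cap A$ of some $j$. Since every edge of $\mathcal{G}$ is charged to at most one component and one attachment, $\sum_j |N(K_j)|=O(m)$.

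\textbf{Step 3: handle the ancestral restriction.} This is the main obstacle. A vertex of $K_j$ may be used on $\rho_{uv}$ only if it lies in $An_{\mathcal{G}}(\{u,v\})$, and this set depends on the pair. I would first try the following observation. Every vertex of $K_j$ that is reached from $u$ through $B$ and is not an ancestor of $u$ must enter the path as a collider, or through a moral edge whose common child is an ancestor of $v$. So it suffices to compute, for each $x\in K_j$, the minimal attachment vertices in topological order that $x$ is an ancestor of. I would obtain these by one backward sweep in topological order, propagating "is an ancestor of some attachment vertex of $K_j$" labels. Each label is carried along a given edge only for the unique component that edge belongs to, so each edge is relaxed $O(1)$ times.

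With these labels in hand, a pair $\{u,v\}$ in $N(K_j)$ is connected in $(\mathcal{G}_{An_{\mathcal{G}}(\{u,v\})})^m$ through $K_j$ exactly when the sub-component of $K_j$ restricted to $An_{\mathcal{G}}(\{u,v\})$ touches both $u$ and $v$. I would then argue that this holds for some non-adjacent pair iff a single search of $K_j$, seeded at the topologically last attachment vertices and descending in reverse order, reaches two non-adjacent attachment vertices. Each such search is linear in the edges incident to $K_j$.

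Summing over components gives $O(m)$. Adding the $O(m)$ preprocessing yields the stated bound. Correctness then follows directly from Lemma~\ref{thm-3.2}. If the single-search equivalence in Step 3 fails, my fallback is to keep the per-component structure and prove the bound amortised over the components. I would first check the equivalence on small examples such as Fig.~\ref{fig-1}(a).
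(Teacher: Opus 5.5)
Your Step~3 is where the argument breaks, and you flag it yourself. The admissible region $An_{\mathcal{G}}(\{u,v\})$ changes with the pair, and the single-search criterion you propose does not track it. Take $V=\{u,v,b,w\}$ with edges $u\to b$, $v\to b$, $b\to w$, $u\to w$, $v\to w$, and $A=\{u,v,w\}$, $B=\{b\}$. There is one component $K_1=\{b\}$ with attachment set $\{u,v,w\}$, and $w$ is forced to be the topologically last attachment vertex. On the natural reading of your description, searching backward from $w$ explores $An_{\mathcal{G}}(w)\cap K_1=\{b\}$. This set touches the non-adjacent attachments $u$ and $v$, so your criterion reports a violation. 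But the only non-adjacent pair in $A$ is $\{u,v\}$, and $An_{\mathcal{G}}(\{u,v\})=\{u,v\}$, so $(\mathcal{G}_{An_{\mathcal{G}}(\{u,v\})})^m$ has no path between $u$ and $v$. Hence $A$ is d-convex and Algorithm~\ref{algorithm-convexity} returns True. The vertex $b$ is usable only for pairs containing $w$, and those pairs are adjacent. The underlying problem is that a region large enough for the pairs containing $w$ is too large for $\{u,v\}$.

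The supporting pieces of Step~3 have the same defect.
\begin{itemize}
\item A directed path from $x\in K_j$ to $u$ may leave $K_j$ through an attachment $a'\in A$ and continue through $A$ and other components. Deciding $x\in An_{\mathcal{G}}(u)$ therefore needs reachability from attachments to $u$ outside $K_j$. One label per vertex, carried only along edges of $K_j$, cannot encode this for arbitrary pairs. Carrying the full sets of reached attachments instead breaks the $O(1)$-per-edge count.
\item A moral link created by a common child $c$ lies in $(\mathcal{G}_{An_{\mathcal{G}}(\{u,v\})})^m$ only if $c\in An_{\mathcal{G}}(\{u,v\})$, and $c$ may be in $A$. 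So restricting the \emph{vertices} of $K_j$ to the ancestral set is not enough. Your union--find components and the shared-child attachments of Step~2 keep such links unconditionally, so even the claimed characterization "connected exactly when the restricted sub-component touches both $u$ and $v$" is not right as stated.
\item The fallback of amortising over components does not help, since a single component can carry $\Theta(n^2)$ non-adjacent attachment pairs with different ancestral sets.
\end{itemize}
As it stands, you have $O(m)$ preprocessing followed by what is still the naive pairwise test, so no linear bound. Note also that you are analysing a re-engineered test rather than the loop of Algorithm~\ref{algorithm-convexity} as written.

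For comparison, the paper's proof is only an accounting remark. It says the Markov boundary and moralization are each $O(m)$ and asserts that the remaining steps do not exceed this; it never analyses the loop over $\mathbb{S}$. Your diagnosis that the loop is where the cost lies is sound. But neither argument supplies the missing ingredient: a mechanism that handles the pair-dependent ancestral restriction, on both vertices and moral links, within an $O(m)$ budget.
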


\begin{proof}
The overall complexity of the algorithm is dominated by the procedures for identifying the Markov boundary and performing moralization, each with a worst-case time complexity of $O(m)$. All remaining steps do not exceed this bound.
\end{proof}

\subsection{D-Convex Subgraph-Based Decomposition}

We now formally define the decomposition of a Bayesian network.

\begin{definition}[Decomposition]\label{def-3.1}  
Let $\mathcal{M}(\mathcal{G})$ be a Bayesian network, and let $A, B, S \subseteq V$ be three mutually disjoint subsets. We say that $(A, B, S)$ forms a decomposition of $\mathcal{M}(\mathcal{G})$,  
if the following conditions hold:  

(i) $A \cup B \cup S = V$;  

(ii) $S$ d-separates $A$ and $B$ in $\mathcal{G}$;  

(iii) $S$ is d-convex in $\mathcal{G}$.  

Further, we say that $S$ is a directed decomposer (d-decomposer). If both $A$ and $B$ are non-empty, the decomposition $(A, B, S)$ is called a proper decomposition. Unless otherwise stated, all decompositions considered in this paper are assumed to be proper.  
\end{definition}

The key requirement is that the decomposer be a d-convex d-separator, which guarantees collapsibility of the resulting sub-models.  Below, we present several fundamental properties of such decompositions, which support the correctness of the procedure and the reconstruction of the joint distribution from lower-dimensional subgraphs. 

\begin{proposition}\label{lem-6}  
Let $\mathcal{M}(\mathcal{G})$ be a Bayesian network with a decomposition $(A, B, S)$. Then, for any $v \in V$, either  
\[
fa_\mathcal{G}(v) \subseteq A \cup S \quad \text{or} \quad fa_\mathcal{G}(v) \subseteq B \cup S.
\]  
\end{proposition}

\begin{proof}  
Suppose there exist two vertices $u, w \in pa_\mathcal{G}(v)$ such that $u \in A$ and $w \in B$. Since $A$ and $B$ are d-separated by $S$, it follows that $v \in S$. This would create a path $l: u \rightarrow v \leftarrow w$ connecting $A$ to $B$ through $v \in S$, contradicting the assumption that $S$ d-separates $A$ and $B$. Therefore, $fa_\mathcal{G}(v) \subseteq A \cup S$ or $fa_{\mathcal{G}}(v) \subseteq B \cup S$.
\end{proof}

The following proposition states that sub-models obtained from a decomposition along a d-decomposer are collapsible.

\begin{proposition}\label{pro-3.2}  
Let $\mathcal{M}(\mathcal{G})$ be a Bayesian network decomposed into $\mathcal{M}_{G_{A \cup S}}$ and $\mathcal{M}_{G_{B \cup S}}$ by a d-decomposer $S$. Then, for each $K \in \{A \cup S, B \cup S, S\}$, we have
\[
\mathcal{M}(\mathcal{G},K) = \mathcal{M}(\mathcal{G}_K).
\]  
\end{proposition}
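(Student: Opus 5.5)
The plan is to reduce all three cases to Lemma~\ref{lem:xie-geng}: it suffices to show that $\mathcal{G}_K$ is d-convex in $\mathcal{G}$ for each $K\in\{A\cup S, B\cup S, S\}$. For $K=S$ this is condition (iii) of Definition~\ref{def-3.1}, so Lemma~\ref{lem:xie-geng} applies at once. By symmetry between $A$ and $B$, it then remains to show that $A\cup S$ is d-convex.

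For $K=A\cup S$ I argue by contradiction. Suppose $u,v\in A\cup S$ are non-adjacent and $l_{uv}$ is an inducing path over $A\cup S$, so that
\[
(A\cup S)\cap V^o(l_{uv})\subseteq V^c(l_{uv})\subseteq An_{\mathcal{G}}(\{u,v\}).
\]
I split into cases according to where $u,v$ lie.

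\emph{Case 1: $u,v\in S$.} If $V^o(l_{uv})\cap(A\cup S)=\emptyset$, then every internal vertex lies in $B$. If some internal vertex lies in $A\cup S$, it is a collider, so $l_{uv}\cap (A\cup S)$ consists only of colliders and endpoints. I would then try to show that $l_{uv}$ is also an inducing path over $S$, contradicting the d-convexity of $S$. This needs $S\cap V^o\subseteq V^c$, which follows immediately since $S\subseteq A\cup S$. So Case 1 is straightforward.

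\emph{Case 2: at least one endpoint lies in $A$, say $u\in A$.} The path leaves $u$ and its internal vertices in $A\cup S$ are all colliders in $An(\{u,v\})$. Consider the first internal vertex not in $A$. If the path stays inside $A\cup S$ up to $v$, pass to the alternative route below. Otherwise the path must enter $B$; I want to derive a d-connection between $A$ and $B$ given $S$. Take the segment from the last vertex $a\in A\cup S$ before entering $B$ to the first $b\in B$.

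This is the main obstacle: colliders on such segments are ancestors of $\{u,v\}$, not necessarily of $S$. My first attempt is a cleaner route via Lemma~\ref{thm-3.2}. An inducing path over $A\cup S$ gives a path in $(\mathcal{G}_{An(\{u,v\})})^m$ whose interior lies in $B$. Since $u$ and $v$ are non-adjacent and the interior is nonempty, that interior lies in $B\cap An(\{u,v\})$. Take the neighbour $b\in B$ of $u$ on this moral path. Either $b\sim u$ in $\mathcal{G}$, or $b$ and $u$ share a child $c\in An(\{u,v\})$. In the second case, if $c\in S$ then $u\to c\leftarrow b$ is d-connecting given $S$; if $c\in B$ then $u\to c$ connects $A$ and $B$ directly. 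Either way, an $A$–$B$ connection must be blocked by $S$, contradicting condition (ii).

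When $u\in A$, a direct edge $u\sim b$ already contradicts d-separation, and a moral edge through a common child is handled by Proposition~\ref{lem-6}, which forbids $fa(c)$ from meeting both $A$ and $B$. The remaining subcase is $u,v\in S$ with the interior in $B$. There I would verify that the same moral path witnesses, via Lemma~\ref{thm-3.2} applied with complement $V\setminus S\supseteq B$, an inducing path over $S$, contradicting (iii). If $u\in A$ and $v\in S$, the edge at $u$ gives the contradiction as above.

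Having established that $\mathcal{G}_{A\cup S}$ and $\mathcal{G}_{B\cup S}$ are d-convex, I conclude by Lemma~\ref{lem:xie-geng}.
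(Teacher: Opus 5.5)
Your proof is correct and follows the paper's route. Like the paper, you reduce to d-convexity of $K$ via Lemma~\ref{lem:xie-geng}, rule out an inducing path over $A\cup S$ with an endpoint in $A$ using the fact that $S$ d-separates $A$ and $B$, and note that an inducing path over $A\cup S$ with both endpoints in $S$ is automatically one over $S$. The paper merely asserts the middle step, and your moral-path argument justifies it: the first neighbour of $u$ lies in $B$, so $u$ is either adjacent to $B$ or shares a child with a vertex of $B$, contradicting Proposition~\ref{lem-6}. Two small points: that moral path has nonempty interior only because acyclicity stops a common child of $u$ and $v$ from lying in $An_{\mathcal{G}}(\{u,v\})$, and you can drop your abandoned first attempt and the redundant final $u,v\in S$ subcase.
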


\begin{proof}  
It suffices to show that $K$ is d-convex in $\mathcal{G}$ for each $K \in \{A \cup S, B \cup S, S\}$, according to Lemma~\ref{lem:xie-geng}. Suppose, for the sake of contradiction, that $A \cup S$ is not d-convex. Then there exists an inducing path $l_{uv}$ over $A \cup S$ with endpoints $u, v \in A \cup S$.

Since $S$ d-separates $A$ and $B$ in $\mathcal{G}$, it follows that $u, v \in S$. By the definition of an inducing path, $l_{uv}$ is then an inducing path of $S$, which contradicts the assumption that $S$ is d-convex. The argument for $B \cup S$ is analogous, and the d-convexity of $S$ follows directly from its definition.

\end{proof}

Proposition~\ref{pro-3.2} shows that each decomposed sub-model coincides with the corresponding marginal model of the original network. This allows parameter learning to be performed directly on each sub-network using local data, with the results stored for subsequent use. Consequently, inference efficiency is substantially improved, and computational costs are reduced. Building on this property, Theorem \ref{thm-1} formalizes the factorization of the network's joint distribution across the decomposition.

\begin{theorem}\label{thm-1}
Suppose that $\mathcal{M}(\mathcal{G})$ is a Bayesian network and 
that $(A,B,S)$ forms a decomposition of $\mathcal{M}(\mathcal{G})$.
Then, for any $P \in \mathcal{M}(\mathcal{G})$, we have  
\[
P(x_V)\, P_{\mathcal{G}_S}(x_S) = P_{\mathcal{G}_{A \cup S}}(x_{A \cup S}) \, P_{\mathcal{G}_{B \cup S}}(x_{B \cup S}),
\]  
where $P_{\mathcal{G}_{A \cup S}}(x_{A \cup S})$, $P_{\mathcal{G}_{B \cup S}}(x_{B \cup S})$, and $P_{\mathcal{G}_S}(x_S)$ factorize according to $\mathcal{G}_{A \cup S}$, $\mathcal{G}_{B \cup S}$, and $\mathcal{G}_S$, respectively.  

\end{theorem}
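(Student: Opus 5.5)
The plan is to combine the conditional independence implied by d-separation with the collapsibility results already established. Fix $P \in \mathcal{M}(\mathcal{G})$. By condition (ii) of Definition~\ref{def-3.1}, $S$ d-separates $A$ and $B$ in $\mathcal{G}$. Since $P$ is compatible with $\mathcal{G}$, the global Markov property gives $X_A \indep X_B \mid X_S\,[P]$. This yields the standard identity
\[
P(x_V)\,P(x_S) = P(x_{A\cup S})\,P(x_{B\cup S}),
\]
where $P(x_K)$ denotes the marginal of $P$ on $K$. In the continuous case this holds almost everywhere with respect to the dominating measure; where $P(x_S)=0$ both sides vanish, because the marginal on $A\cup S$ is then zero as well. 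So the algebraic identity is immediate, and the substance of the theorem is that each marginal factorizes according to the corresponding induced subgraph.

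For this I would invoke Proposition~\ref{pro-3.2}, which states that $\mathcal{M}(\mathcal{G},K)=\mathcal{M}(\mathcal{G}_K)$ for each $K\in\{A\cup S,B\cup S,S\}$. Its proof rests on the d-convexity of $K$ together with Lemma~\ref{lem:xie-geng}. Hence the marginal $P_K$ lies in $\mathcal{M}(\mathcal{G}_K)$, i.e.\ it factorizes according to $\mathcal{G}_K$, and we may write it as $P_{\mathcal{G}_K}(x_K)$. Substituting these three marginals into the identity above gives the claimed equation.

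It is worth making the factorizations explicit, since this is useful for the later algorithms. By Proposition~\ref{lem-6}, every family $fa_\mathcal{G}(v)$ lies in $A\cup S$ or in $B\cup S$. For $v\in A$ we get $pa_{\mathcal{G}}(v)\subseteq A\cup S$, and likewise for $v\in B$. I would therefore expect
\[
P_{\mathcal{G}_{A\cup S}}(x_{A\cup S})=\prod_{v\in A}P(x_v\mid x_{pa_\mathcal{G}(v)})\cdot P_{\mathcal{G}_S}(x_S)\cdot(\text{correction}).
\]
This direct route is harder, however, because for $v\in S$ the parents in $\mathcal{G}_S$ differ from those in $\mathcal{G}$. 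I would not pursue it. Instead I would rely on the model-level equality from Proposition~\ref{pro-3.2}, which already packages exactly this difficulty.

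The step I expect to be the main obstacle is therefore entirely inside Proposition~\ref{pro-3.2}: checking that $A\cup S$ is d-convex. An inducing path over $A\cup S$ with one endpoint in $A$ and the other in $S$ is not obviously excluded by d-separation alone, since that argument only forces both endpoints into $S$ when they lie on opposite sides. If I had to reprove it, I would argue as follows. Take an inducing path $l_{uv}$ over $A\cup S$ with $u\in A$. All of its internal vertices in $A\cup S$ are colliders that are ancestors of $\{u,v\}$. A path of this kind, after possibly being shortened, yields a d-connection between $u$ and $v$ given any subset of $A\cup S\setminus\{u,v\}$ (Lemma~\ref{thm-3.2}). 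Its excursions into $B$ would then have to cross $S$ only at colliders, which are ancestors of $\{u,v\}\subseteq A\cup S$. That contradicts either the separation of $A$ and $B$ by $S$ or the d-convexity of $S$. Taking Proposition~\ref{pro-3.2} as given, the theorem follows directly.
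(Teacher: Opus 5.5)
Your proof follows the paper's own: d-separation gives $X_A\indep X_B\mid X_S$ and hence the product identity, and Proposition~\ref{pro-3.2} places each marginal $P_K$ in $\mathcal{M}(\mathcal{G}_K)$, so it can be rewritten as $P_{\mathcal{G}_K}$. The case you flag inside Proposition~\ref{pro-3.2} has a short proof. Take an inducing path over $A\cup S$ from $u\in A$, and consider its segment up to the first vertex $b\in B$. Such a vertex exists, since otherwise the path would be $u\to w\leftarrow v$ with $w\in An_{\mathcal{G}}(\{u,v\})$, which creates a cycle. That segment has only colliders internally, so it is either an $A$--$B$ edge or $u\to w\leftarrow b$ with $w\in A\cup S$, and either way $S$ fails to d-separate $A$ and $B$.
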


\begin{proof}
The subsets $A$ and $B$ are d-separated by $S$ in $\mathcal{G}$. Hence, for any $P\in\mathcal{M}(\mathcal{G})$, we have $A\indep B\mid S$ under $P$, which implies
\begin{equation}\label{eq:factor1}
P(x_V)\,P_S(x_S)
=
P_{A\cup S}(x_{A\cup S})\,P_{B\cup S}(x_{B\cup S}).
\end{equation}

By Proposition~\ref{pro-3.2}, for each
$K\in\{A\cup S,\,B\cup S,\,S\}$, the marginal distribution $P_K$ belongs to
$ \mathcal{M}(\mathcal{G}_K)$ and therefore factorizes according to the induced
subgraph $\mathcal{G}_K$. Identifying each marginal distribution with its
representation in the corresponding sub-model, we may write
$P_K = P_{\mathcal{G}_K}$ for all such $K$.

Substituting this identification into \eqref{eq:factor1}, we obtain
\[
P(x_V)\,P_{\mathcal{G}_S}(x_S)
=
P_{\mathcal{G}_{A\cup S}}(x_{A\cup S})\,P_{\mathcal{G}_{B\cup S}}(x_{B\cup S}),
\]
which completes the proof.
\end{proof}

According to Theorem~\ref{thm-1}, the decomposition method proposed in this study offers several key advantages. First, the marginal distributions of local sub-models can be computed directly from subgraph data and stored, with collapsibility ensuring their correctness. Second, the decomposition introduces no additional directed edges, preserving the original subgraph topology and further enhancing both inference efficiency and accuracy.

In practice, accurately identifying suitable d-decomposers and efficiently decomposing BNs remain challenging. This difficulty arises from the potentially large number of d-separators in a DAG, which makes sequential d-convexity verification and decomposition computationally expensive. In the following subsection, we address this challenge by using a tree-based structure to efficiently construct a minimal d-decomposition tree, thereby enabling scalable parameter learning and probabilistic inference.

\subsection{An Efficient Decomposition Algorithm}

We first recall the tree concepts needed for the decomposition algorithm. A tree is an acyclic, connected graph in which a unique path exists between any two nodes. A tree $T = (\mathcal{C}, \mathcal{E}_T)$ satisfies the \emph{junction property} if, for any distinct nodes $C_i, C_j \in \mathcal{C}$, every node along the unique path connecting $C_i$ and $C_j$ in $T$ contains the intersection $C_i \cap C_j$. Furthermore, $T$ is called \emph{reduced} if, for any distinct nodes $C_i, C_j \in \mathcal{C}$, neither node is a subset of the other, i.e., $C_i \nsubseteq C_j$ and $C_j \nsubseteq C_i$.

A d-decomposer $S$ is called a \emph{minimal d-decomposer} if it is also a minimal d-separator; that is, there exist two vertices $x$ and $y$ such that $S$ is a minimal $xy$-d-separator. Based on minimal d-decomposers, a more refined \emph{minimal d-decomposition tree} can be constructed. We now provide its formal definition.

\begin{definition}[Minimal d-decomposition tree]
Given a DAG $\mathcal{G} = (V, E)$ and a reduced tree $T = (\mathcal{C}, \mathcal{E}_T)$ defined over $\mathcal{G}$, the tree $T$ is called a \textbf{minimal d-decomposition tree} of $\mathcal{G}$ if it satisfies the following conditions:

\begin{itemize}
\item For every edge $(C_i, C_j) \in \mathcal{E}_T$, the intersection $C_i \cap C_j$ is a minimal d-decomposer in $\mathcal{G}$;
\item Each node $C_i \in \mathcal{C}$ cannot be further decomposed by any minimal d-decomposer of $\mathcal{G}$.
\end{itemize}
\end{definition}

This definition specifies both the separator condition on edges and the irreducibility condition on nodes.  In general, a minimal d-decomposition tree satisfying these conditions is not unique.

Condition~(i) requires that, for each edge of the tree, the intersection of the corresponding nodes is a minimal d-decomposer in $\mathcal{G}$. This condition ensures that the Bayesian network $\mathcal{M}(\mathcal{G})$ is correctly decomposed into two sub-models along the edge.  

Condition~(ii) further requires that no minimal $d$-decomposer can decompose the sub-model associated with any tree node. This guarantees that each sub-model is structurally irreducible.

In practice, a minimal d-decomposition tree of a DAG can be constructed from a minimal d-separator tree, a concept introduced by Liu et al. \cite{liu2010minimal} for structure learning. During construction, adjacent nodes whose intersection forms a non-convex d-separator are iteratively merged until no further merges are possible. This process produces a minimal d-decomposition tree of the DAG. Based on this approach, we propose an algorithm for constructing a minimal d-decomposition tree of a DAG $\mathcal{G}$, as described in Algorithm~\ref{alg-main}, followed by a brief discussion of its correctness and computational complexity.

\begin{algorithm}[t]
    \caption{Construction of a Minimal d-Decomposition Tree}
    \label{alg-main}
    \begin{algorithmic}[1]
        \Require A Bayesian network $\mathcal{M}(\mathcal{G}) = (\mathcal{G}, \mathcal{P}(\mathcal{G}))$ 
        \Ensure A minimal d-decomposition tree of $\mathcal{M}(\mathcal{G})$
        
        \State Construct a minimal d-separator tree $T = (\mathcal{C}, \mathcal{E}_T)$ of $\mathcal{G}$
        \State Compute the set of candidate d-separators $\mathcal{S} = \{S_i = C_i \cap C_j : (C_i, C_j) \in \mathcal{E}_T\}$
        \ForAll{$S_i \in \mathcal{S}$}
            \If{\emph{DCTest}($\mathcal{G}, S_i$) is false}
                \State Remove the edge $(C_i, C_j)$ corresponding to $S_i$ from $T$
                \State Merge $C_i$ and $C_j$ into a single cluster
            \EndIf
        \EndFor
        \State \Return $T$
    \end{algorithmic}
\end{algorithm}

\begin{theorem}  
Let $\mathcal{M}(\mathcal{G})$ be a Bayesian network, and let $T = (\mathcal{C}, \mathcal{E}_T)$ be the tree constructed by Algorithm~\ref{alg-main}. Then $T$ is a minimal d-decomposition tree of $\mathcal{G}$.  
\end{theorem}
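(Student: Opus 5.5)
The plan is to verify the two defining conditions of a minimal d-decomposition tree directly, and to carry along the structural properties of the minimal d-separator tree of Liu et al.~\cite{liu2010minimal}. The properties we need are these: it is a reduced tree with the junction property; every edge intersection $C_i\cap C_j$ is a minimal d-separator of $\mathcal{G}$ that d-separates $\bigcup(\text{nodes on the }C_i\text{ side})\setminus(C_i\cap C_j)$ from the other side; and every d-separator of vertices lying in a common node is realized inside that node.

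We first show that merging preserves these properties. If $C_i,C_j$ are adjacent and we replace them by $C_i\cup C_j$, the result is still a tree with the junction property, because merging two adjacent nodes contracts an edge and junction paths only shorten. It remains reduced, since no other node was contained in $C_i$ or $C_j$, and $C_i\cup C_j$ is not contained in any other node. Every remaining edge keeps its intersection: by the junction property, for a neighbour $C_k$ of $C_i$ we have $C_k\cap(C_i\cup C_j)=C_k\cap C_i$. Hence the remaining separators are unchanged, and each is still a minimal d-separator splitting the vertex set according to the two sides of the tree. Since the loop examines each original separator once and the separators are invariant under merging, the order of processing does not matter.

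\textbf{Condition (i).} For every edge surviving the algorithm, $S=C_i\cap C_j$ passed \emph{DCTest}, so by Lemma~\ref{thm-3.2} and the correctness of Algorithm~\ref{algorithm-convexity}, $S$ is d-convex. Let $A$ and $B$ be the vertices on the two sides of the edge minus $S$. Then $(A,B,S)$ satisfies Definition~\ref{def-3.1}, using the separator property inherited from the d-separator tree. Together with minimality as a d-separator, $S$ is a minimal d-decomposer.

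\textbf{Condition (ii).} This is the main obstacle. We must show that no final node $C$ is split by a minimal d-decomposer $S'$ of $\mathcal{G}$; that is, there are no $x,y\in C\setminus S'$ with $S'$ a d-convex minimal $xy$-separator. A final node $C$ is a union of original nodes glued along separators that failed \emph{DCTest}. If $x,y$ lie in one original node, we argue that the original nodes of a minimal d-separator tree are not separable by any minimal d-separator, so no such $S'$ exists. If $x,y$ lie in different original nodes, consider the tree path joining them within $C$. Every edge on it carries a non-convex minimal separator $S_e$. We then argue that a minimal $xy$-separator $S'$ must coincide with one of the separators $S_e$ on that path. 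To show this, we would use Lemma~\ref{thm-3.2}: an inducing path for $S_e$ runs through vertices outside $S_e$ and between two non-adjacent vertices of $S_e$. We would then show that this path, combined with the fact that $S'$ must meet the moral ancestral graph of $\{x,y\}$ along every connecting route, forces $S'$ to contain or cross $S_e$ in a way that transfers the inducing path to $S'$, contradicting d-convexity of $S'$. We would try first the comparison $S'\supseteq S_e$ or $S'=S_e$ via minimality and the junction property. If that fails, we would fall back on the moral-graph characterization and an exchange argument on minimal separators.
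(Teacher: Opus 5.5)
Your argument for condition (i) is essentially the paper's. A surviving edge carries an original separator of the minimal d-separator tree, so it is a minimal d-separator, and it passed \emph{DCTest}, so it is d-convex. Your junction-property check that merging leaves the other intersections unchanged is a useful addition. For reducedness you need that no node lies in $C_i\cup C_j$, not just in $C_i$ or $C_j$; this also follows from the junction property, since a node $C_k$ on the $C_i$ side has $C_k\cap C_j\subseteq C_i$. The paper disposes of condition (ii) in one unsupported sentence, so there is no worked argument to match there. However, the route you sketch for (ii) cannot succeed, because the statement you set out to prove is false.

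You formalize (ii) pairwise: there are no $x,y\in C\setminus S'$ such that $S'$ is a d-convex minimal $xy$-separator. In Example~\ref{example-2}, take the final node $C=\{g,i,s,j,l\}$ of Fig.~\ref{fig-1}(c) and $S'=\{g\}$, which is itself an edge separator of the output tree. The only $i$--$l$ paths in $\mathcal{G}$ are $i\rightarrow g\rightarrow l$, blocked at the non-collider $g$, and $i\rightarrow s\rightarrow j\leftarrow l$, blocked at the collider $j\notin An_{\mathcal{G}}(\{g\})=\{c,d,g,i\}$. Meanwhile $\emptyset$ does not separate $i$ and $l$. So $\{g\}$ is a d-convex minimal $il$-separator with $i,l\in C\setminus\{g\}$.

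This same example breaks your key steps:
\begin{itemize}
\item You claim a minimal $xy$-separator must coincide with a separator on the tree path inside $C$. Those separators are $\{g,s\}$ and $\{s,l\}$, not $\{g\}$, so no contradiction with the d-convexity of $S'$ can be extracted.
\item You claim original nodes cannot be split by any minimal d-separator. But the original node $\{g,i,s\}$ of Fig.~\ref{fig-1}(b) is split pairwise by the d-convex minimal $gs$-separator $\{i\}$.
\end{itemize}

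The reading that makes (ii) sensible is the one in Definition~\ref{def-3.1}. There, $S'$ decomposes $C$ only if $C\setminus S'$ splits into nonempty $A',B'$ with $S'$ d-separating them, so in particular no edge joins $A'$ and $B'$. Under that reading $\{g\}$ does not split $C$, since $i\rightarrow s\rightarrow j\leftarrow l$ keeps $C\setminus\{g\}$ connected. A proof of (ii) therefore has to use this partition structure, for instance that the inducing paths of the merged, non-convex separators keep the pieces connected. Pairwise separation plus an exchange argument on minimal separators will not do it. As it stands, that part of your proposal is a plan aimed at the wrong target rather than an argument.
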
  

\begin{proof}  
It suffices to show that, for every edge $(C_i, C_j) \in \mathcal{E}_T$, the intersection $C_i \cap C_j$ constitutes a minimal d-decomposer in $\mathcal{G}$, and that each cluster $C_i \in \mathcal{C}$ admits no further decomposition by any minimal d-decomposer of $\mathcal{G}$.  

Let $T_{\text{pre}}$ denote the minimal d-separator tree before merging. For each edge $(C_i, C_j) \in \mathcal{E}_T$, its intersection $C_i \cap C_j$ corresponds to some intersection $C_p \cap C_q$ of an edge $(C_p, C_q) \in \mathcal{E}_{T_{\text{pre}}}$. Since all intersections $C_p \cap C_q$ in $T_{\text{pre}}$ are minimal d-separators, the intersections $C_i \cap C_j$ in $T$ remain minimal. Furthermore, Algorithm~\ref{alg-main} guarantees that every $C_i \cap C_j$ in $T$ satisfies the d-convexity condition. Therefore, each intersection $C_i \cap C_j$ corresponding to an edge in $T$ is a minimal d-decomposer of $\mathcal{G}$.

Finally, each cluster $C_i \in \mathcal{C}$ admits no further decomposition by any minimal d-decomposer of $\mathcal{G}$.  
\end{proof}

\begin{theorem}  
The time complexity of Algorithm~\ref{alg-main} is at most $O(nm)$, where $n$ and $m$ denote the numbers of vertices and edges in $\mathcal{G}$, respectively.  
\end{theorem}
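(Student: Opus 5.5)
The bound follows by splitting Algorithm~\ref{alg-main} into its two phases and bounding each separately: building the minimal d-separator tree in Step~1, and running the \emph{DCTest} loop with merges in Steps~2--8. The target is $O(nm)$ for each phase, so the total is $O(nm)$ as well.

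\textbf{Phase 1: the minimal d-separator tree.} I would use the construction of Liu et al.~\cite{liu2010minimal}, stated here as the standard moralization-based procedure.
\begin{itemize}
\item Moralize $\mathcal{G}$ in $O(m)$, taking moral edges as linear in the size of the family lists so that the moral graph has $O(m)$ edges.
\item Obtain a minimal triangulation and its clique tree by a LEX~M/MCS-M type elimination, which costs $O(nm)$.
\item Refine each clique-tree separator to a minimal d-separator by ancestral moralization and reachability searches. Each search costs $O(m)$, and there are at most $n-1$ tree edges, giving $O(nm)$.
\end{itemize}
The step I expect to be the main obstacle is the edge-count claim for the moral graph. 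In the worst case moralization can add $\sum_v |pa(v)|^2$ edges, not $O(m)$. I would handle this either by citing the complexity stated in \cite{liu2010minimal} or by counting moral edges implicitly, so that searches over $(\mathcal{G}_{An})^m$ are carried out through co-parent lookups in $\mathcal{G}$ and never materialize the moral graph.

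\textbf{Phase 2: the testing loop.} A tree on at most $n$ nodes has at most $n-1$ edges, so $|\mathcal{S}|\le n-1$. Computing each intersection $C_i\cap C_j$ costs $O(n)$ with indicator arrays, for $O(n^2)$ in total. By the earlier proposition, each call \emph{DCTest}$(\mathcal{G},S_i)$ costs $O(m)$, so all tests together cost $O(nm)$. Each merge unions two clusters in $O(n)$ and relinks the incident tree edges. Over the whole loop this is $O(n^2)$, because every merge permanently removes an edge and so at most $n-1$ merges occur.

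\textbf{Combining.} The total is $O(nm)+O(n^2)$. Since $\mathcal{G}$ is connected (as assumed in Algorithm~\ref{algorithm-convexity}), $m\ge n-1$, so $n^2=O(nm)$. Hence Algorithm~\ref{alg-main} runs in $O(nm)$ time.
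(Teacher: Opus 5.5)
Your proposal is correct and follows essentially the paper's route. The paper also attributes the dominant $O(nm)$ cost to building the minimal d-separator tree, citing Liu et al., and simply asserts that the remaining steps stay within that bound; your Phase~2 accounting (at most $n-1$ calls to \emph{DCTest} at $O(m)$ each, plus $O(n^2)$ intersection and merge work absorbed via $m\ge n-1$) makes that assertion explicit.

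Phase~1 rests on the citation alone, as in the paper, because your fallback of handling moral edges implicitly fails. Enumerating co-parents through children still costs on the order of $\sum_v |pa_{\mathcal{G}}(v)|^2$, which can exceed $m$ by a factor of $n$, so your reconstruction of Liu et al.'s procedure does not by itself give $O(nm)$.
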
  

\begin{proof}  
The worst-case time complexity is dominated by the construction of the minimal d-separator tree, which requires $O(nm)$ time \cite{liu2010minimal}. The remaining steps of the algorithm do not exceed this bound.  
\end{proof}

As an illustration, we consider the DAG $\mathcal{G}$ in Fig.~\ref{fig-1}(a) and apply Algorithm~\ref{alg-main} to construct its minimal d-decomposition tree.

\begin{example}\label{example-2}  
We first construct a minimal d-separator tree of $\mathcal{G}$ using the method of Liu et al. \cite{liu2010minimal}, as shown in Fig.~\ref{fig-1}(b). The set of all minimal d-separators is  
\[
\mathcal{S} = \bigl\{\{d\}, \{g\}, \{g,i\}, \{g,s\}, \{s,l\}, \{k\}\bigr\}.
\]

Applying the DCTest algorithm reveals that the d-separators $\{g,s\}$ and $\{s,l\}$ are not d-convex in $\mathcal{G}$. Consequently, the clusters $\{g,i,s\}$, $\{g,s,l\}$, and $\{s,l,j\}$ are merged into a single cluster $\{g,i,s,j,l\}$. The resulting minimal d-decomposition tree of $\mathcal{G}$ is shown in Fig.~\ref{fig-1}(c).  
\end{example}

\begin{corollary}\label{cor-1}
Let $\mathcal{M}(\mathcal{G})$ be a Bayesian network, and let 
$T = (\mathcal{C}, \mathcal{E}_T)$ denote a minimal d-decomposition tree of $\mathcal{G}$, where 
$\mathcal{C} = \{C_1, C_2, \ldots, C_k\}$ is the collection of clusters. 
Then any joint distribution $P(X_V) \in \mathcal{M}(\mathcal{G})$ admits the factorization
\begin{align*}
P(X_V)
=
\frac{\prod_{i=1}^k P_{\mathcal{G}_{C_i}}(x_{C_i})}
{\prod_{(C_i, C_j) \in \mathcal{E}_T} P_{\mathcal{G}_{C_i \cap C_j}}(x_{C_i \cap C_j})}.
\end{align*}
\end{corollary}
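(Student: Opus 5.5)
We will prove Corollary~\ref{cor-1} by induction on the number $k$ of clusters, repeatedly applying Theorem~\ref{thm-1} along one tree edge at a time. For $k=1$ the tree has no edges. The single cluster is $C_1=V$, since the clusters of a minimal d-decomposition tree cover $V$; this holds because it is obtained from a minimal d-separator tree by merging, and the latter covers $V$. So the formula reduces to $P(x_V)=P_{\mathcal{G}_V}(x_V)$, which is trivial.

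For the inductive step, pick a leaf $C_k$ of $T$ and let $C_j$ be its unique neighbour. Put $S=C_k\cap C_j$, $A=C_k\setminus S$, and $B=V\setminus C_k$. Because $T$ is reduced, $A\neq\emptyset$, and $B\neq\emptyset$ since $C_j\not\subseteq C_k$. By the junction property, every vertex of $C_k$ lying in some other cluster lies in $C_j$, hence in $S$. Therefore $\bigcup_{i<k}C_i=B\cup S$. By the definition of a minimal d-decomposition tree, $S$ is a d-decomposer, so it is d-convex.

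The step I expect to be the main obstacle is showing that $S$ d-separates $A$ from $B$, i.e.\ that the \emph{edge} $(C_k,C_j)$ separates the two sides of the tree, and not merely some pair $x,y$. I would argue this from the construction: in the minimal d-separator tree of Liu et al.\ \cite{liu2010minimal}, each edge separator d-separates the union of clusters on one side from the union on the other. Merging adjacent clusters preserves this property for the surviving edges, because the two sides of a surviving edge are unions of the original sides. If one only has the abstract definition, I would instead read the first condition as asserting that $(A,B,S)$ is a decomposition in the sense of Definition~\ref{def-3.1}, which is the intended meaning. With this established, $(A,B,S)$ is a proper decomposition, and Theorem~\ref{thm-1} gives
\[
P(x_V)=\frac{P_{\mathcal{G}_{C_k}}(x_{C_k})\,P_{\mathcal{G}_{B\cup S}}(x_{B\cup S})}{P_{\mathcal{G}_S}(x_S)}.
\]

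Next, apply the induction hypothesis to the marginal $P_{B\cup S}$ on $\mathcal{G}'=\mathcal{G}_{B\cup S}$ with tree $T'=T\setminus\{C_k\}$. By Proposition~\ref{pro-3.2}, $P_{B\cup S}\in\mathcal{M}(\mathcal{G}')$. Two facts must be checked for $T'$ to be a minimal d-decomposition tree of $\mathcal{G}'$. First, d-separation in $\mathcal{G}$ transfers to $\mathcal{G}'$, since $I(\mathcal{G})_{B\cup S}\subseteq I(\mathcal{G}')$. Second, d-convexity of each edge separator transfers to $\mathcal{G}'$: an inducing path in $\mathcal{G}'$ is also a path in $\mathcal{G}$, and $An_{\mathcal{G}'}\subseteq An_{\mathcal{G}}$, so the inducing-path condition persists in $\mathcal{G}$. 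Irreducibility of the nodes is not needed for the factorization, only the separator and convexity properties are. The marginals of $P_{B\cup S}$ on $C_i$ and on $C_i\cap C_j$ coincide with those of $P$, and by Lemma~\ref{lem:xie-geng} they are the sub-model distributions $P_{\mathcal{G}_{C_i}}$ and $P_{\mathcal{G}_{C_i\cap C_j}}$. Substituting the inductive factorization of $P_{\mathcal{G}_{B\cup S}}$ into the display above yields the product over all $k$ clusters divided by the product over all edges of $T$, which completes the induction.
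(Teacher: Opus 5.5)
Your proof is correct and follows the same route as the paper, which simply states that the factorization follows by applying Theorem~\ref{thm-1} iteratively along the edges of $T$. Your leaf-peeling induction (transferring d-separation and d-convexity to $\mathcal{G}_{B\cup S}$, and making explicit the point the paper leaves implicit, namely that each edge separator must d-separate the two sides of the tree, justified via the Liu et al.\ d-separation-tree property preserved under merging) is that iteration written out in full. One small fix: for the clusters $C_i$ with $i<k$ you should not cite Lemma~\ref{lem:xie-geng}, since you never showed $C_i$ is d-convex in $\mathcal{G}$; instead, membership of the marginal in $\mathcal{M}(\mathcal{G}_{C_i})$ comes directly from your induction hypothesis, because $(\mathcal{G}_{B\cup S})_{C_i}=\mathcal{G}_{C_i}$.
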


\begin{proof}
The result follows directly from iteratively applying Theorem~\ref{thm-1} along the edges of the minimal d-decomposition tree $T$.
\end{proof}

From Corollary~\ref{cor-1}, if a Bayesian network can be decomposed into a union of $k$ d-convex subgraphs, its joint distribution is fully determined by the marginal distributions over these $k$ subgraphs, together with the $k-1$ marginal distributions over their pairwise intersections. This decomposition reduces both computational and storage costs and provides the foundation for efficient parameter learning and probabilistic inference using a minimal d-decomposition tree.

\section{Parallelized Learning and Local Inference Based on Decomposition}\label{sec-4}

This section presents two procedures. We first perform parallel parameter learning on the decomposed subnetworks. We then develop a local inference algorithm based on pruning in a minimal d-decomposition tree. 

\subsection{Parallel Parameter Learning}

Parallel parameter learning improves scalability in high-dimensional Bayesian networks by splitting the global task into smaller subproblems. Using a minimal d-decomposition tree, the global estimation task is divided into independent subproblems corresponding to each cluster and their pairwise intersections. For each subproblem, we extract the corresponding data and estimate the local parameters by maximum likelihood or Bayesian methods. The global joint distribution is then reconstructed using Corollary~\ref{cor-1}.
The detailed procedure is summarized in Algorithm~\ref{alg:distributed-inference}.

\begin{algorithm}[htbp]
    \caption{Parallel Parameter Estimation via a Minimal d-Decomposition Tree}
    \label{alg:distributed-inference}
    \begin{algorithmic}[1]
        \Require A connected DAG $\mathcal{G}=(V,E)$ and a dataset $\mathcal{D}$
        \Ensure The estimated global model $\hat{P}(X_V)$
        
        \State Construct a minimal d-decomposition tree $T=(\mathcal{C}, \mathcal{E}_T)$ of $\mathcal{G}$ using Algorithm~\ref{alg-main}.
        
        \ForAll{$C \in \mathcal{C} \cup \{C_i \cap C_j \mid (C_i, C_j) \in \mathcal{E}_T\}$} \textbf{in parallel}
            \State Extract the induced subgraph $\mathcal{G}_C$ over variables $X_C$
            \State Extract the corresponding sub-dataset $\mathcal{D}_C$ from $\mathcal{D}$
            \State Estimate the local distribution $P(X_C)$ from $\mathcal{D}_C$ by maximum likelihood or Bayesian methods
        \EndFor
        
        \State Reconstruct the estimated global model $\hat{P}(X_V)$ using the factorization in Corollary~\ref{cor-1}
        \State \Return $\hat{P}(X_V)$
    \end{algorithmic}
\end{algorithm}

\subsection{Local Statistical Inference}

The goal of probabilistic inference in large-scale BNs is to compute distributions over specified target subsets efficiently. A minimal d-decomposition tree typically contains numerous nodes that are irrelevant to the query variables, and directly using the entire tree incurs unnecessary computational overhead. To address this, we introduce recursive reduction rules that prune irrelevant leaves from a minimal d-decomposition tree, thereby improving both the efficiency and scalability of inference.

\begin{proposition}\label{pro-3}
Let $C_i$ be a leaf node of $T$, and let $V' = \bigcup_{C' \in \mathcal{C} \setminus \{C_i\}} C'$. If both the query variables and the evidence variables are contained in $V'$, then $C_i$ can be safely removed from $T$.
\end{proposition}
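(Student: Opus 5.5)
The plan is to show that removing the leaf $C_i$ leaves a valid factorization of the marginal $P(x_{V'})$, and that every query $P(x_Q\mid x_E)$ with $Q\cup E\subseteq V'$ is a functional of that marginal.

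\textbf{Setup.} Let $C_j$ be the unique neighbour of the leaf $C_i$ in $T$, and set $S=C_i\cap C_j$. By the definition of a minimal d-decomposition tree, $S$ is a minimal d-decomposer of $\mathcal{G}$. By the junction property, every vertex of $C_i$ that lies in another cluster lies in $S$, so $C_i\cap V'=S$. Define
\[
A=C_i\setminus S,\qquad B=V'\setminus S=V\setminus C_i .
\]
Then $A\cup B\cup S=V$. Removing the tree edge $(C_i,C_j)$ separates $C_i$ from the rest of the tree, so $S$ d-separates $A$ and $B$; this is what it means for this edge to induce a decomposition. Since $S$ is also d-convex, $(A,B,S)$ is a decomposition in the sense of Definition~\ref{def-3.1}.

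\textbf{Factorization of the marginal.} Apply Corollary~\ref{cor-1} and integrate out $x_A$. Only the factor $P_{\mathcal{G}_{C_i}}(x_{C_i})$ involves $x_A$, and
\[
\int P_{\mathcal{G}_{C_i}}(x_{C_i})\,dx_A=P_S(x_S).
\]
This cancels the separator term $P_{\mathcal{G}_S}(x_S)$ in the denominator. The result is
\[
P(x_{V'})=\frac{\prod_{C\in\mathcal{C}\setminus\{C_i\}}P_{\mathcal{G}_C}(x_C)}{\prod_{(C_p,C_q)\in\mathcal{E}_T\setminus\{(C_i,C_j)\}}P_{\mathcal{G}_{C_p\cap C_q}}(x_{C_p\cap C_q})}.
\]
This is exactly the factorization that Corollary~\ref{cor-1} associates with the pruned tree $T'=T-C_i$.

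The identification of each factor is justified by Proposition~\ref{pro-3.2} applied to $K=B\cup S=V'$: it gives $\mathcal{M}(\mathcal{G},V')=\mathcal{M}(\mathcal{G}_{V'})$, so the marginal on $V'$ is a bona fide Bayesian network on $\mathcal{G}_{V'}$. Lemma~\ref{lem:xie-geng} and Proposition~\ref{pro-3.2} further guarantee that the stored sub-model estimates on the remaining clusters are the correct marginals. $T'$ is still a tree with the junction property and with the same separator edges, so it is a valid decomposition tree for $\mathcal{G}_{V'}$.

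\textbf{Conclusion.} The query is
\[
P(x_Q\mid x_E)=\frac{\int P(x_{V'})\,dx_{V'\setminus(Q\cup E)}}{\int P(x_{V'})\,dx_{V'\setminus E}},
\]
which depends only on $P(x_{V'})$. Hence it can be computed on $T'$ alone, and $C_i$ can be removed. Iterating this argument justifies the recursive pruning.

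\textbf{Main obstacle.} The delicate step is the claim $C_i\cap V'=S$. The junction property is stated for trees, but it must survive the merging performed in Algorithm~\ref{alg-main}. Merging adjacent nodes of a junction tree preserves the running-intersection property, and I would verify this first. Given that property, the marginalization that collapses the leaf factor onto $P_S$ is routine.
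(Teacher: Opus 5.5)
Your proof is correct. It rests on the same core fact as the paper's proof, collapsibility of $\mathcal{M}(\mathcal{G})$ onto $V'$, but it proves more than the paper does. The paper simply asserts that $V'$ ``remains d-convex.'' It then treats $T'$ as a minimal d-decomposition tree of $\mathcal{G}_{V'}$, so inference on $T'$ is justified only implicitly, by reapplying Corollary~\ref{cor-1} to the smaller graph. You do two things the paper does not:
(i) you derive d-convexity of $V'$ from the leaf-edge decomposition $(C_i\setminus S,\,V\setminus C_i,\,S)$ via Proposition~\ref{pro-3.2} with $K=V'$;
(ii) you integrate $x_{C_i\setminus S}$ out of the factorization on $T$, so the leaf factor reduces to $P_S$ and cancels its separator term.
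Step (ii) shows directly that the tables stored on $T'$ reproduce $P(x_{V'})$, without needing $T'$ to qualify as a decomposition tree of $\mathcal{G}_{V'}$. On its own it gives exactness for every query with $Q\cup E\subseteq V'$, and it iterates trivially over successive leaves. Collapsibility is then needed only to interpret the surviving clusters as sub-models, for example when Algorithm~\ref{alg:local-inference} runs variable elimination in $\mathcal{M}(\mathcal{G}_C)$.

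One caveat. The paper's definition of a minimal d-decomposition tree does not literally require $C_i\cap C_j$ to d-separate the two sides of the tree. State this as an assumption rather than as ``what it means.'' It does hold for trees produced by Algorithm~\ref{alg-main}, because it is inherited from Liu et al.'s d-separation tree. It also survives the merges, as does the junction property you flag as the main obstacle: contracting an edge of a junction tree changes neither the separator of any remaining edge nor the vertex sets on its two sides.
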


\begin{proof}
Let $T'$ denote the minimal d-decomposition tree obtained after removing $C_i$. Since $V'$ contains all query and evidence variables, and remains d-convex in $\mathcal{G}$, the model $\mathcal{M}(\mathcal{G})$ can be collapsed onto the sub-model $\mathcal{M}(\mathcal{G}_{V'})$ according to Theorem~\ref{thm-1}. Consequently, the target posterior distribution can be computed entirely from $\mathcal{M}(\mathcal{G}_{V'})$, and removing $C_i$ does not change the inference result.
\end{proof}

Proposition~\ref{pro-3} provides a criterion for pruning leaf nodes from a minimal d-decomposition tree without affecting the accuracy of inference on the query variables. The key idea is that any information relevant to the query or evidence variables in such leaf nodes is already contained in the remaining nodes, and the model $\mathcal{M}(\mathcal{G})$ is collapsible onto this remaining set. By iteratively removing such leaf nodes, a minimal d-decomposition tree achieves structural reduction while preserving all dependencies required for inference. This property ensures that local posterior probabilities can be computed efficiently without processing the entire network, as formalized in the following algorithm.

\begin{algorithm}[htbp]
	\caption{Local Inference via Pruning on a Minimal d-Decomposition Tree}
	\label{alg:local-inference}
	\begin{algorithmic}[1]
		\Require 
		A minimal d-decomposition tree $T = (\mathcal{C}, \mathcal{E}_T)$ with learned parameters, a query variable set $Q$, an evidence variable set $E$, and an observed value vector $x_E$.
		\Ensure The posterior distribution $\hat{P}(X_Q \mid X_E = x_E)$
		
		\Repeat
			\For{each leaf node $C_i$ in $T$}
				\If{$(Q \cup E) \cap C_i \subseteq \bigcup_{C' \in \mathcal{C} \setminus \{C_i\}} C'$}
					\State Remove $C_i$ from $T$
				\EndIf
			\EndFor
		\Until{no leaf nodes can be further removed}
		
		\If{$T$ contains only one remaining cluster $C$}
			\State Apply variable elimination in the sub-model $\mathcal{M}(\mathcal{G}_C)$ to compute $\hat{P}(X_Q \mid X_E = x_E)$
		\Else
			\State Apply belief propagation on the pruned tree $T$ to compute $\hat{P}(X_Q \mid X_E = x_E)$
		\EndIf
		
		\State \Return $\hat{P}(X_Q \mid X_E = x_E)$
	\end{algorithmic}
\end{algorithm}

Algorithm~\ref{alg:local-inference} performs local inference on a Bayesian network using a minimal d-decomposition tree. The procedure removes leaf nodes whose query or evidence variables are already included in the remaining clusters. This pruning continues until all remaining leaves are relevant to the query or evidence. If only one cluster remains, variable elimination is applied to the corresponding sub-model. Otherwise, belief propagation is applied to the pruned minimal d-decomposition tree to compute the posterior probability of the query variables given the evidence. This yields exact inference while avoiding computation on parts of the network that are irrelevant to the target posterior.

\section{Empirical studies}\label{sec-5}

In this section, we report numerical experiments evaluating parameter learning, model accuracy, and inference efficiency under the proposed minimal d-decomposition tree framework. Experiments are conducted on a system with Intel(R) Xeon(R) Silver 4215R CPUs (2 processors) and 128 GiB of memory. All code used in this study is available at \url{https://github.com/Balance-H/Decomposition-for-BNs}.

\subsection{Parameter Learning}\label{sec-5.1}

We first compare parameter estimation on decomposed subnetworks with direct global estimation. Six representative Bayesian networks from the \texttt{BNlearn} repository are used—\textit{Child}, \textit{Alarm}, \textit{Hailfinder}, \textit{Hepar2}, \textit{Win95pts}, and \textit{Pigs}—with vertex counts ranging from 20 to 441. Experiments are conducted as follows:

\begin{itemize}
\item[(1)] For each network $\mathcal{M}(\mathcal{G})$, generate a dataset $\mathcal{D}$ using forward sampling.
\item[(2)] Estimate the parameters of the Bayesian network defined on $\mathcal{G}$ from $\mathcal{D}$, and record the computational time as $t_{\text{full}}$.
\item[(3)] Decompose $\mathcal{M}(\mathcal{G})$ into multiple sub-models and estimate their local parameters in parallel using the corresponding subsets of $\mathcal{D}$. Record the time required as $t_{\text{decom}}$.
\item[(4)] Compute the ratio of computational times, $t_{\text{full}} / t_{\text{decom}}$, as the evaluation metric. Repeat this procedure 100 times for each network to obtain 100 time ratios, which are then visualized using a scatter plot.
\end{itemize}

\begin{remark}
  In these Bayesian networks, all variables are binary. Parameters are estimated by maximum likelihood with a fixed sample size of 150,000. The number of parallel cores in the decomposition method is assigned dynamically based on the number of sub-models, with a maximum of 9 cores. To ensure a fair comparison, the reported running times include only the parameter learning phase.
\end{remark}

\begin{figure}[!t]
	\centering
	\includegraphics[width=0.45\textwidth]{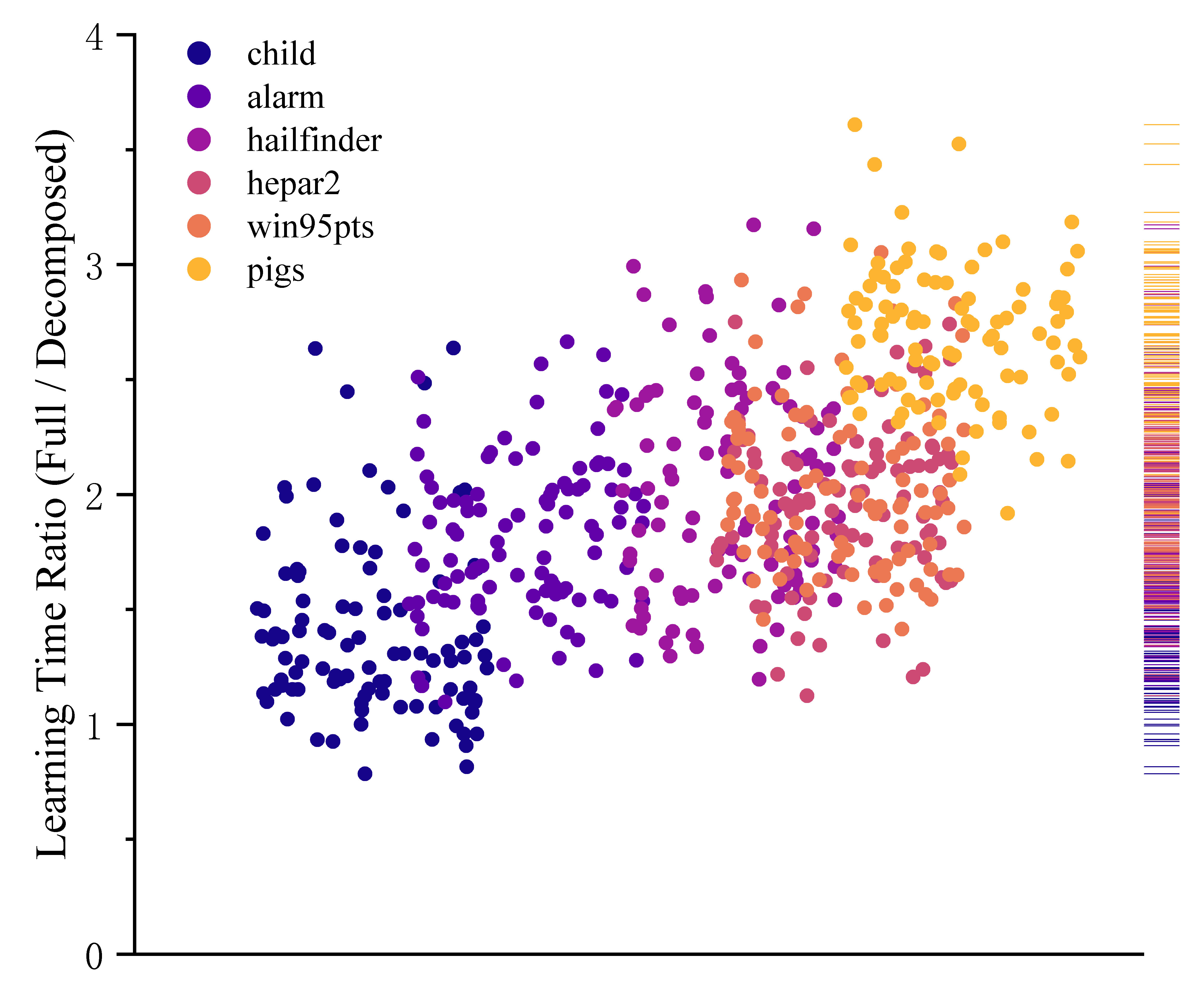}
	\caption{Efficiency of parameter estimation for global learning and parallel decomposition}
	\label{fig-5}
\end{figure}

Figure~\ref{fig-5} shows how the efficiency of parameter estimation changes with network size, with networks arranged from small to large. For smaller networks, such as \textit{Child} and \textit{Alarm}, the decomposition method provides limited efficiency gains due to the overhead of parallel computation. As the network size increases, the advantage of decomposition becomes more pronounced. For large networks, parallel estimation on sub-models yields substantial speedups over global estimation. Across the six benchmark networks, the serial portion accounts for less than 20\% of the total computation time on average. Detailed information on the treewidth of the networks and the distribution of subgraph dimensions after decomposition is provided in Supplementary Material~\ref{supp-ana}.

\subsection{Accuracy of Parallelized Learning}\label{sec-5.2}

In this subsection, we evaluate the accuracy of the learned model $\mathcal{M}'(\mathcal{G})$ by comparing it with the original model
$\mathcal{M}(\mathcal{G})$ using the Kullback–Leibler divergence, Hellinger distance, and Bhattacharyya distance.

\subsubsection{Multinomial Bayesian Networks}\label{sec-5.2.2}

We consider discrete Bayesian networks of varying sizes:
\begin{itemize}
\item \textbf{Small:} \textit{Asia}, \textit{Survey};
\item \textbf{Medium:} \textit{Child}, \textit{Insurance}, \textit{Mildew}, \textit{Alarm};
\item \textbf{Large:} \textit{Hepar2}, \textit{Hailfinder}, \textit{Win95pts}, \textit{Pathfinder}.
\end{itemize}

The experimental process is summarized as follows:
\begin{itemize}
	\item[(1)] For each network $\mathcal{M}(\mathcal{G})$, a compatible multinomial distribution with binary state space $\{0,1\}$ is generated at random. Datasets of sizes 500, 1000, 2500, 5000, 7500, and 10000 are then sampled using forward sampling.

    \item[(2)] Algorithms~\ref{alg-main} and~\ref{alg:distributed-inference} are then used to decompose $\mathcal{G}$ and estimate the resulting model $\mathcal{M}'(\mathcal{G})$.

	\item[(3)] The distance between $\mathcal{M}(\mathcal{G})$ and $\mathcal{M}^\prime(\mathcal{G})$ is quantified using three metrics:
	\begin{itemize}
		\item[(a)] \textbf{Kullback-Leibler divergence}:
		\[
		\mathrm{KL}(\hat{p}\,\|\,p) = \sum_{x} \hat{p}(x) \left[ \ln \hat{p}(x) - \ln p(x) \right].
		\]
		
		\item[(b)] \textbf{Hellinger distance}:
		\[
		\mathrm{H}^2(\hat{p}, p) = \frac{1}{2} \sum_{x} \left( \sqrt{\hat{p}(x)} - \sqrt{p(x)} \right)^2.
		\]
		
		\item[(c)] \textbf{Bhattacharyya distance}:
		\[
		\mathrm{BD}(\hat{p}, p) = -\ln \left( \sum_{x} \sqrt{\hat{p}(x) \cdot p(x)} \right).
		\]
	\end{itemize}
	
	\item[(4)] The above procedure is repeated 100 times for each network with independently generated distributions to evaluate variability.		
	
	\end{itemize}

For high-dimensional Bayesian networks, exact computation of distributional distances between models is often infeasible. Therefore, Gibbs sampling is used to approximate the three distances. The results are summarized in Fig.~\ref{fig-6}, where columns one to three correspond to the three distance metrics, and rows one to three represent networks of increasing size. In each subplot, the horizontal axis denotes the sample size, and the vertical axis shows the estimated distance. The results indicate that the model $\mathcal{M}'(\mathcal{G})$ obtained through parallel decomposition learning closely matches the distribution of the original model $\mathcal{M}(\mathcal{G})$. All three metrics decrease significantly as the sample size increases. When the sample size reaches 10,000, the three distances are close to zero, indicating that the proposed method recovers the target distributions with high accuracy.

\begin{figure*}[!t]
	\centering
	\includegraphics[width=\textwidth]{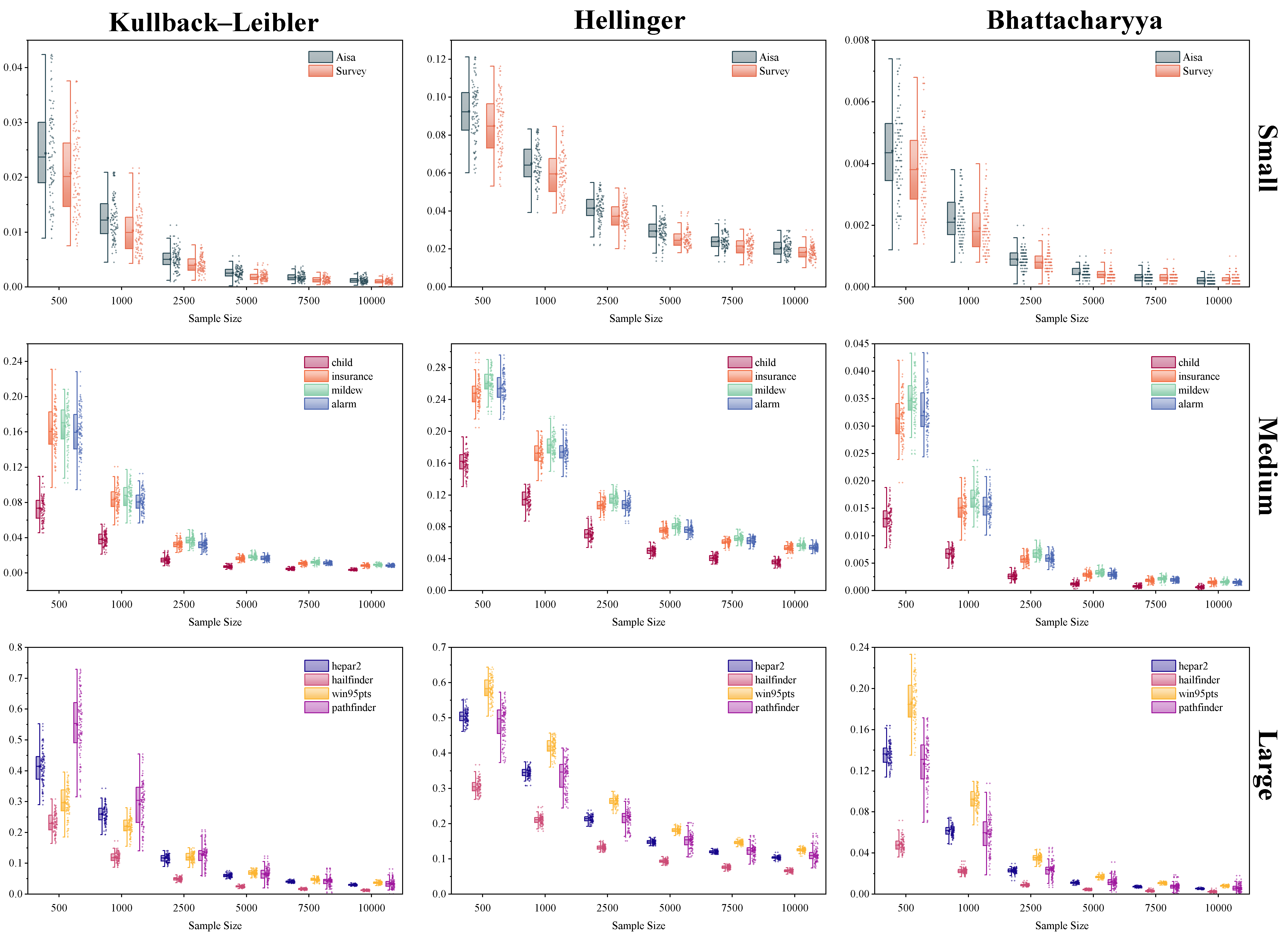}
	\caption{Box plots of distributional distances between the original and parallelized-learning models.}
	\label{fig-6}
\end{figure*}

\subsubsection{Gaussian Bayesian Networks}

We further examine four continuous Gaussian Bayesian networks from the \texttt{BNlearn} repository: \texttt{ecoli70}, \texttt{magic-niab}, \texttt{magic-irri}, and \texttt{arth150}, with variable dimensions ranging from 44 to 107. For each Bayesian network $\mathcal{M}(\mathcal{G})$, we follow the procedure in Section~\ref{sec-5.2.2} and evaluate the learned model using the following three metrics:

\begin{itemize}

\item[(a).] \textbf{Kullback–Leibler divergence}:
\begin{equation*}
\resizebox{0.95\linewidth}{!}{$
\mathrm{KL}(\hat{p} \,\|\, p)
=
\frac{1}{2}
\left[
\mathrm{tr}\!\left(\Sigma^{-1} \hat{\Sigma}\right)
+
(\mu - \hat{\mu})^\top \Sigma^{-1} (\mu - \hat{\mu})
-
n
+
\ln\!\left(
\frac{\det \Sigma}{\det \hat{\Sigma}}
\right)
\right].
$}
\end{equation*}

\item[(b).] \textbf{Hellinger distance}:
\begin{equation*}
\resizebox{0.95\linewidth}{!}{$
\mathrm{H}^2(\hat{p}, p)
=
1
-
\frac{|\Sigma|^{1/4} \, |\hat{\Sigma}|^{1/4}}
{\left|\frac{\Sigma + \hat{\Sigma}}{2}\right|^{1/2}}
\exp\!\left(
-\frac{1}{8}
(\mu - \hat{\mu})^\top
\left(\frac{\Sigma + \hat{\Sigma}}{2}\right)^{-1}
(\mu - \hat{\mu})
\right).
$}
\end{equation*}

\item[(c).] \textbf{Wasserstein-2 distance}:
\begin{equation*}
\resizebox{0.95\linewidth}{!}{$
\mathrm{W}_2^2(\hat{p}, p)
=
\|\mu - \hat{\mu}\|_2^2
+
\mathrm{tr}\!\left(
\Sigma + \hat{\Sigma}
-
2\left(
\Sigma^{1/2}
\hat{\Sigma}
\Sigma^{1/2}
\right)^{1/2}
\right).
$}
\end{equation*}

\end{itemize}

Here, $p = \mathcal{N}(\mu, \Sigma)$ and $\hat{p} = \mathcal{N}(\hat{\mu}, \hat{\Sigma})$ denote the true and estimated Gaussian distributions, respectively, and $n$ denotes the dimensionality of the network.

\begin{figure*}[!t]
	\centering
	\includegraphics[width=\textwidth]{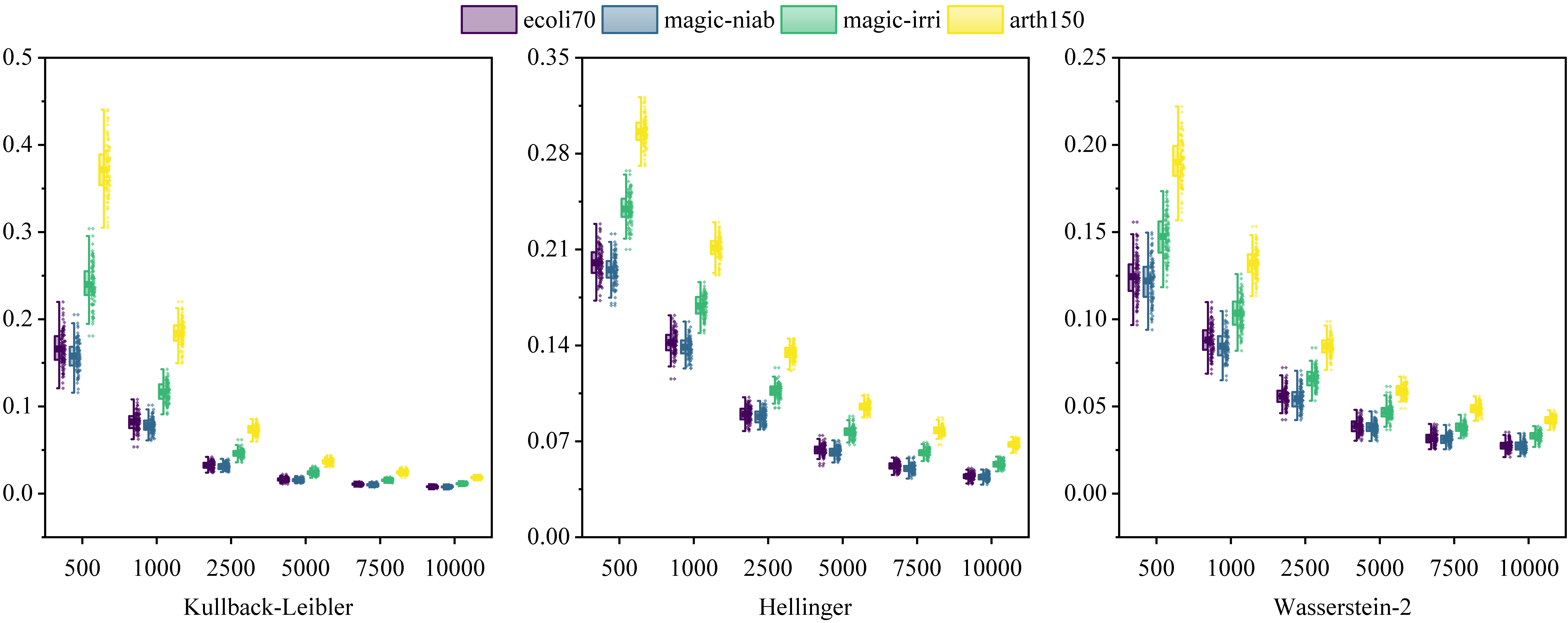}
	\caption{Box plots of distributional distances between the original model $\mathcal{M}(\mathcal{G})$ and the learned model $\mathcal{M}^\prime(\mathcal{G})$.}
	\label{fig:divergence_gaussian}
\end{figure*}

The results in Fig.~\ref{fig:divergence_gaussian} show that, for continuous Gaussian networks, the model learned via decomposition, $\mathcal{M}^\prime(\mathcal{G})$, closely approximates the original model $\mathcal{M}(\mathcal{G})$. The Wasserstein-2 distance stays low across all four networks and decreases as the sample size increases, stabilizing around 0.04 for 10,000 samples. This suggests that the learned model preserves the main statistical structure, including means and covariances. At the same time, the decomposition framework reduces computational cost. In addition, the average condition numbers of the estimated covariance matrices for these networks are reported in Supplementary Material~\ref{supp-con}, providing further evidence of the numerical stability of the learned Gaussian models.


\subsection{Local Inference in Minimal d-Decomposition Trees}
The previous experiments showed that the learned decomposition model closely matches the original model in distribution. We therefore turn to inference efficiency and evaluate the computational cost of joint probability queries under a minimal d-decomposition tree framework. Meanwhile, we conduct stress tests by gradually increasing the dimensionality of the query variables to assess the benefits of the pruning mechanism in high-dimensional settings.

The experiments are conducted on the DAG \textbf{Pathfinder}, denoted by $\mathcal{G}$. This network consists of 109 vertices and 195 directed edges. The overall experimental procedure is summarized as follows: 

\begin{enumerate}
	\item[(1)] \textbf{Data Generation:} For the DAG $\mathcal{G}$, compatible binary multinomial distributions are randomly generated to construct the Bayesian network $\mathcal{M}(\mathcal{G})$, from which 1,000 samples are drawn.  
	
	\item[(2)] \textbf{Inference Experiments:} A query set $Q$ is selected at random. For each such $Q$, we compute the corresponding joint probability using three tree-based inference methods: the proposed minimal d-decomposition tree method, Wu’s lossless decomposition method \cite{wu2007lossless}, and the conventional junction-tree method (Algorithm 4 in \cite{2009Probabilistic}). For each method, the inference time and the root mean squared error (RMSE) relative to the original model are recorded.
	
	\item[(3)] \textbf{Repetition and Statistics:} For different sizes of $Q$ (i.e., $|Q| \in \{2,5,10,15,20,25\}$), Step 2 is repeated 100 times. The average inference time and RMSE are reported to assess the stability and performance of each method.
\end{enumerate}

\begin{table}[htbp]
\centering
\caption{Average inference time and RMSE over 100 runs for three inference methods with varying query dimensions $|Q|$.} 
\begin{tabular}{ccccc}
\toprule
$|Q|$ & \textbf{Decomp. } & \textbf{Lossless} & \textbf{Belief Prop.} & \textbf{RMSE} \\
\midrule
2  & \textbf{0.0613} & 0.5061 & 0.7743 & 9.78E-04 \\
5  & \textbf{0.1176} & 0.4984 & 0.7668 & 7.80E-06 \\
10 & \textbf{0.1855} & 0.5112 & 0.7744 & 3.39E-08 \\
15 & \textbf{0.2338} & 0.5520 & 0.7729 & 2.25E-10 \\
20 & \textbf{0.2966} & 2.4314 & 0.8450 & 7.11E-13 \\
25 & \textbf{1.8073} & 30.1472 & 1.8515 & 1.39E-15 \\
\bottomrule
\end{tabular}
\label{tab:inference_time_rmse}
\end{table}

The experimental results are shown in Table~\ref{tab:inference_time_rmse}. All three methods are exact, so they have the same target inference output. We therefore report only one average RMSE column. The nonzero RMSE reflects the discrepancy between the learned model and the original data-generating model, rather than the differences among the three exact inference procedures.
The results indicate that pruning significantly improves inference efficiency for low-dimensional queries.  As the query dimension increases, the efficiency gains from pruning gradually decrease. When the query dimension reaches $|Q|=25$, the inference efficiency is nearly the same as that of the conventional junction tree. In addition, the computation time of the lossless decomposition algorithm increases sharply at $|Q|=25$. This increase has two main causes: the method is not specifically designed for small-query pruning, and larger query sets induce exponentially larger tensor products during marginalization.

\section{Conclusion and Discussion}\label{sec-6}

We proposed a decomposition framework for Bayesian networks that partitions the original network into subgraphs whose associated sub-models can be learned and stored in parallel. We also constructed a minimal d-decomposition tree together with a pruning rule for local inference. Based on this framework, we developed two algorithms for parameter estimation and probabilistic inference. The experiments show that the proposed approach improves computational efficiency in both parameter estimation and inference, especially when the query dimension is small.

A minimal d-decomposition tree is not unique, and identifying a decomposition that minimises computational cost for a given network remains an open problem. For subgraphs that remain high-dimensional after decomposition, one can apply a second-stage decomposition to build a nested d-decomposition tree, analogous to the nested junction tree in  \cite{kjaerulff1998inference}. Such extensions may further improve inference efficiency and merit future study.

The proposed method also has limitations. The method is less effective for dense networks and for high-dimensional queries. However, this limitation is shared more broadly by exact inference methods.

\section{Acknowledgement}

This work was partially supported by the National Natural Science Foundation of China
(Grant Numbers 11701491, 11726630), and Tianyuan Fund for Mathematics of the National Natural Science Foundation of China
(Grant Numbers 12426520 and 12426105).


\bibliographystyle{IEEEtran}  
\bibliography{ref}
\newpage
\section*{Biography Section}

\vspace{11pt}

\begin{IEEEbiography}[{\includegraphics[width=1in,height=1.25in,clip,keepaspectratio]{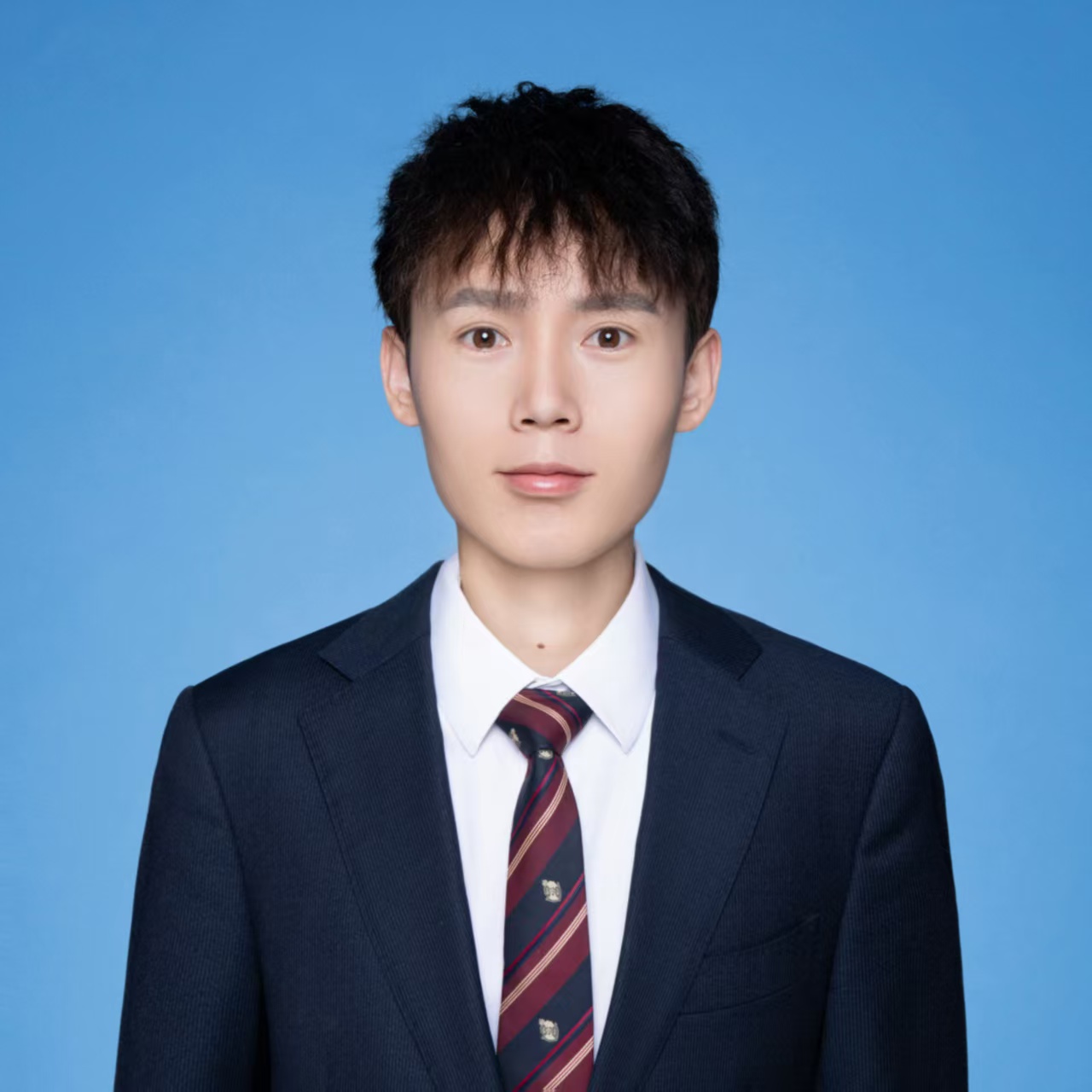}}]{Pei Heng}
is a Ph.D. candidate at Northeast Normal University. His research interests include structural dimensionality reduction and log-linear modelling.
\end{IEEEbiography}

\vspace{11pt}

\begin{IEEEbiography}[{\includegraphics[width=1in,height=1.25in,clip,keepaspectratio]{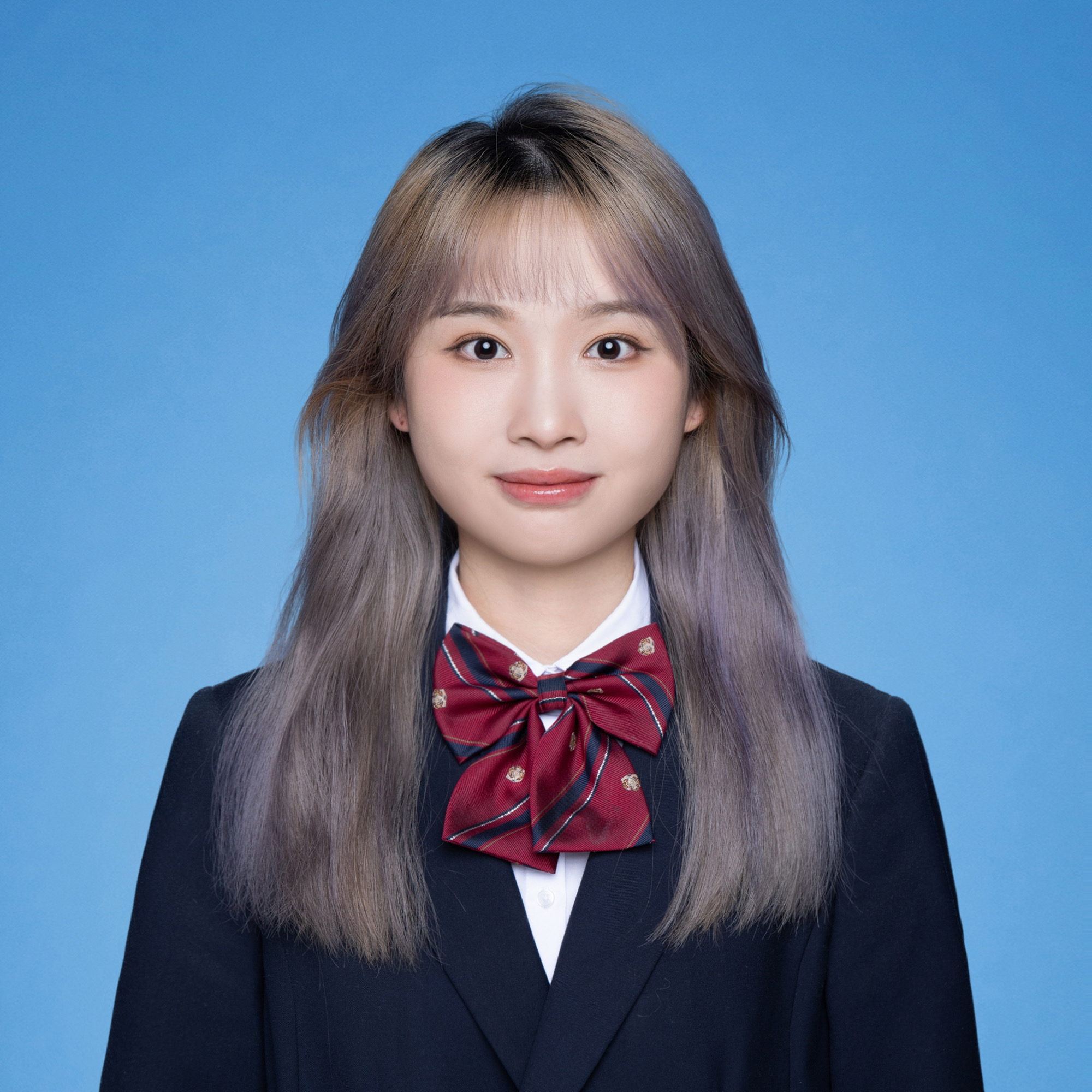}}]{Xinyi Hu}
is an M.S. candidate at Xinjiang University. Her research interests include Bayesian network dimensionality reduction and parallel inference.
\end{IEEEbiography}

\vspace{11pt}

\begin{IEEEbiography}[{\includegraphics[width=1in,height=1.25in,clip,keepaspectratio]{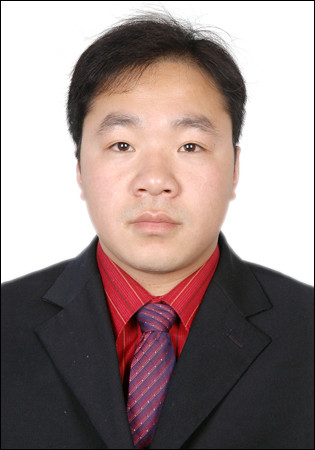}}]{Yi Sun} received the Ph.D. degree in mathematics from Nankai University in 2016 and is currently a Professor at Xinjiang University of Finance and Economics. His research interests include structural dimensionality reduction, causal inference, and graph decomposition.
\end{IEEEbiography}

\vfill

\newpage
\onecolumn 

\setcounter{section}{0}      
\setcounter{equation}{0}     

\renewcommand{\thesection}{S.\arabic{section}} 

\renewcommand{\theequation}{S.\arabic{equation}} 

\setcounter{theorem}{0}      
\renewcommand{\thetheorem}{S.\arabic{theorem}}

\setcounter{proposition}{0} 
\renewcommand{\theproposition}{S.\arabic{proposition}}

\setcounter{figure}{0} 
\renewcommand{\thefigure}{S.\arabic{figure}}

\setcounter{table}{0} 
\renewcommand{\thetable}{S.\arabic{table}}

\begin{center}
	{\Large\bfseries
		\linespread{1.3}\selectfont
		Supplementary Material for ``Decomposition for Bayesian Networks: Local and Parallel Inference''
	}
	
	\vspace{1.5em}

\end{center}

\section{Proof of Lemma 1}\label{app_lem:1}

\begin{proposition}[\cite{evans2016graphs}\label{prop-5}]
	Let $\mathcal{G}$ and $\mathcal{G}^{\prime}$ be $D A G s$ with the same vertex set $V$.
	
	(a) If $A \subseteq V$ is an ancestral set in $\mathcal{G}$, then $\mathcal{M}(\mathcal{G}, A)=\mathcal{M}\left(\mathcal{G}_A\right)$.
	
	(b) If $\mathcal{G}^{\prime} \subseteq \mathcal{G}$, then $\mathcal{M}\left(\mathcal{G}^{\prime}\right) \subseteq \mathcal{M}(\mathcal{G})$.
\end{proposition}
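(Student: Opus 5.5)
We prove the two parts separately. Both reduce to the recursive factorization \eqref{eq-2} together with the ordinary marginalization of conditional densities. For part (a), the inclusion $\mathcal{M}(\mathcal{G},A)\subseteq\mathcal{M}(\mathcal{G}_A)$ is shown by integrating out $V\setminus A$ from
\[
P(x_V)=\prod_{v\in V}P(x_v\mid x_{pa_{\mathcal{G}}(v)}).
\]
Since $A$ is ancestral, no vertex of $A$ has a parent outside $A$, so $pa_{\mathcal{G}}(v)=pa_{\mathcal{G}_A}(v)$ for every $v\in A$. We also use that every $w\in V\setminus A$ has no descendant in $A$.

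To integrate, fix a topological order of $\mathcal{G}$ in which $A$ precedes $V\setminus A$; such an order exists because $A$ is ancestral. We then integrate out the vertices of $V\setminus A$ one at a time, starting from the last in this order. At each step, the variable being removed appears only in its own conditional factor, because none of its children remain. Its factor therefore integrates to one. After all of $V\setminus A$ is removed, what remains is
\[
P_A(x_A)=\prod_{v\in A}P(x_v\mid x_{pa_{\mathcal{G}_A}(v)}),
\]
which factorizes according to $\mathcal{G}_A$.

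For the reverse inclusion, take any $Q\in\mathcal{M}(\mathcal{G}_A)$ with conditionals $Q(x_v\mid x_{pa_{\mathcal{G}_A}(v)})$. We extend $Q$ to $V$ by choosing, for each $w\notin A$, an arbitrary conditional $P(x_w\mid x_{pa_{\mathcal{G}}(w)})$; for instance, $X_w$ can be taken independent of its parents with a fixed marginal. The product of all these conditionals lies in $\mathcal{M}(\mathcal{G})$. By the computation above, its marginal on $A$ is exactly $Q$. Hence $\mathcal{M}(\mathcal{G}_A)\subseteq\mathcal{M}(\mathcal{G},A)$.

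For part (b), let $P\in\mathcal{M}(\mathcal{G}')$, so $P=\prod_v P(x_v\mid x_{pa_{\mathcal{G}'}(v)})$. Because $\mathcal{G}'\subseteq\mathcal{G}$ on the same vertex set, we have $pa_{\mathcal{G}'}(v)\subseteq pa_{\mathcal{G}}(v)$ for every $v$. Each factor $P(x_v\mid x_{pa_{\mathcal{G}'}(v)})$ can therefore be viewed as a conditional kernel in $x_{pa_{\mathcal{G}}(v)}$ that does not depend on the extra arguments $pa_{\mathcal{G}}(v)\setminus pa_{\mathcal{G}'}(v)$. With these kernels, the product has the form required by \eqref{eq-2} for $\mathcal{G}$, so $P\in\mathcal{M}(\mathcal{G})$. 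If the model is defined through the Markov property rather than factorization, the same conclusion follows from a different observation: adding edges only removes d-separations, so every independence required by $\mathcal{G}$ is already implied by $\mathcal{G}'$.

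The only delicate point is in (a). We must verify that the kernels obtained after integration really are the conditionals $P_A(x_v\mid x_{pa_{\mathcal{G}_A}(v)})$ of the marginal itself, and not merely nonnegative functions. This holds because the integration leaves the factors indexed by $A$ untouched, and these factors already normalize as a recursive factorization along the topological order restricted to $A$. In the continuous case we also need measurability of the chosen kernels, which is standard for regular conditional distributions.
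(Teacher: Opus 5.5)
Your proof is correct. The paper gives no proof of this proposition to compare against: it imports the statement from \cite{evans2016graphs} and uses it only as a black box in the proof of Lemma~\ref{lem:xie-geng}. What you supply is the standard self-contained factorization argument, in three steps:
\begin{itemize}
\item for (a), forward inclusion, integrate out $V\setminus A$ in reverse topological order, each removed vertex's factor integrating to one;
\item for (a), reverse inclusion, extend an element of $\mathcal{M}(\mathcal{G}_A)$ by arbitrary kernels on $V\setminus A$;
\item for (b), regard each $\mathcal{G}'$-kernel as a $\mathcal{G}$-kernel that is constant in the extra parents $pa_{\mathcal{G}}(v)\setminus pa_{\mathcal{G}'}(v)$.
\end{itemize}

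The one place ancestrality is genuinely needed is the existence of a topological order of $\mathcal{G}$ placing $A$ before $V\setminus A$, equivalently $pa_{\mathcal{G}}(v)=pa_{\mathcal{G}_A}(v)$ for $v\in A$. You justify this correctly. Your reverse-order integration is also sound: when a vertex $w$ is removed, all of its children lie later in the order, so their factors have already integrated to one and $x_w$ appears only in its own factor.

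Two minor remarks.
\begin{itemize}
\item The ``delicate point'' you flag at the end is not actually required. Under the factorization definition, membership of $P_A$ in $\mathcal{M}(\mathcal{G}_A)$ only needs $P_A$ to be some product of kernels with the parent sets of $\mathcal{G}_A$, which your integration already shows. That these kernels are versions of $P_A$'s own conditionals is true, but it is a bonus.
\item Your alternative argument for (b) under the Markov-property definition is also sound. Every path in $\mathcal{G}'$ is a path in $\mathcal{G}$ with the same colliders and non-colliders. Moreover $An_{\mathcal{G}'}(Z)\subseteq An_{\mathcal{G}}(Z)$. Hence any blocking in $\mathcal{G}$ persists in $\mathcal{G}'$.
\end{itemize}
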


\begin{proof}[\textbf{Proof of Lemma  1}, \cite{heng2026structural}]
	First, let $P \in \mathcal{M}(\mathcal{G})$. Since $\mathcal{G}_A$ is d-convex in $\mathcal{G}$, we have $I(\mathcal{G})_A = I(\mathcal{G}_A),$
	so any conditional independence $X \indep Y \mid Z$ implied by $\mathcal{G}_A$ for disjoint $X,Y,Z \subseteq A$ also holds in $P$. Therefore, the marginal distribution $P_A$ satisfies all conditional independences of $\mathcal{G}_A$, implying $P_A \in \mathcal{M}(\mathcal{G}_A),$
	and hence
	\[
	\mathcal{M}(\mathcal{G}, A) \subseteq \mathcal{M}(\mathcal{G}_A).
	\]
	
	Conversely, consider the DAG $\mathcal{G}' = (V, E_A)$ obtained by treating $V \setminus A$ as isolated vertices in $\mathcal{G}_A$. Then $A$ is an ancestral set in $\mathcal{G}'$, and Proposition \ref{prop-5}(a) gives $\mathcal{M}(\mathcal{G}_A) = \mathcal{M}(\mathcal{G}', A).$
	Since $\mathcal{G}' \subseteq \mathcal{G}$, Proposition \ref{prop-5}(b) implies
	$\mathcal{M}(\mathcal{G}', A) \subseteq \mathcal{M}(\mathcal{G}, A),$
	and thus
	\[
	\mathcal{M}(\mathcal{G}_A) = \mathcal{M}(\mathcal{G}', A) \subseteq \mathcal{M}(\mathcal{G}, A).
	\]
	
	Combining both directions establishes the equality
	\[
	\mathcal{M}(\mathcal{G}, A) = \mathcal{M}(\mathcal{G}_A).
	\]
\end{proof}

\section{Analysis of Clique Size Distribution and Treewidth}\label{supp-ana}

In this section, we analyse the clique size distribution and treewidth of the networks used in Section~V experiments on parallelized parameter learning. Specifically, we consider six representative Bayesian networks: \textit{Child}, \textit{Alarm}, \textit{Hailfinder}, \textit{Hepar2}, \textit{Win95pts}, and \textit{Pigs}.

\subsection*{Normalized Clique Size Distribution}

For each network, we first construct the minimal d-decomposition tree. Let $|C|$ denote the size of a clique, that is, the number of variables in the decomposed subnetwork, and let $|V|$ denote the total number of variables in the network. To allow comparison across networks of different sizes, we normalise the clique size as $|C| / |V|$.

\begin{figure}[htbp]
	\centering
	\includegraphics[width=0.7\textwidth]{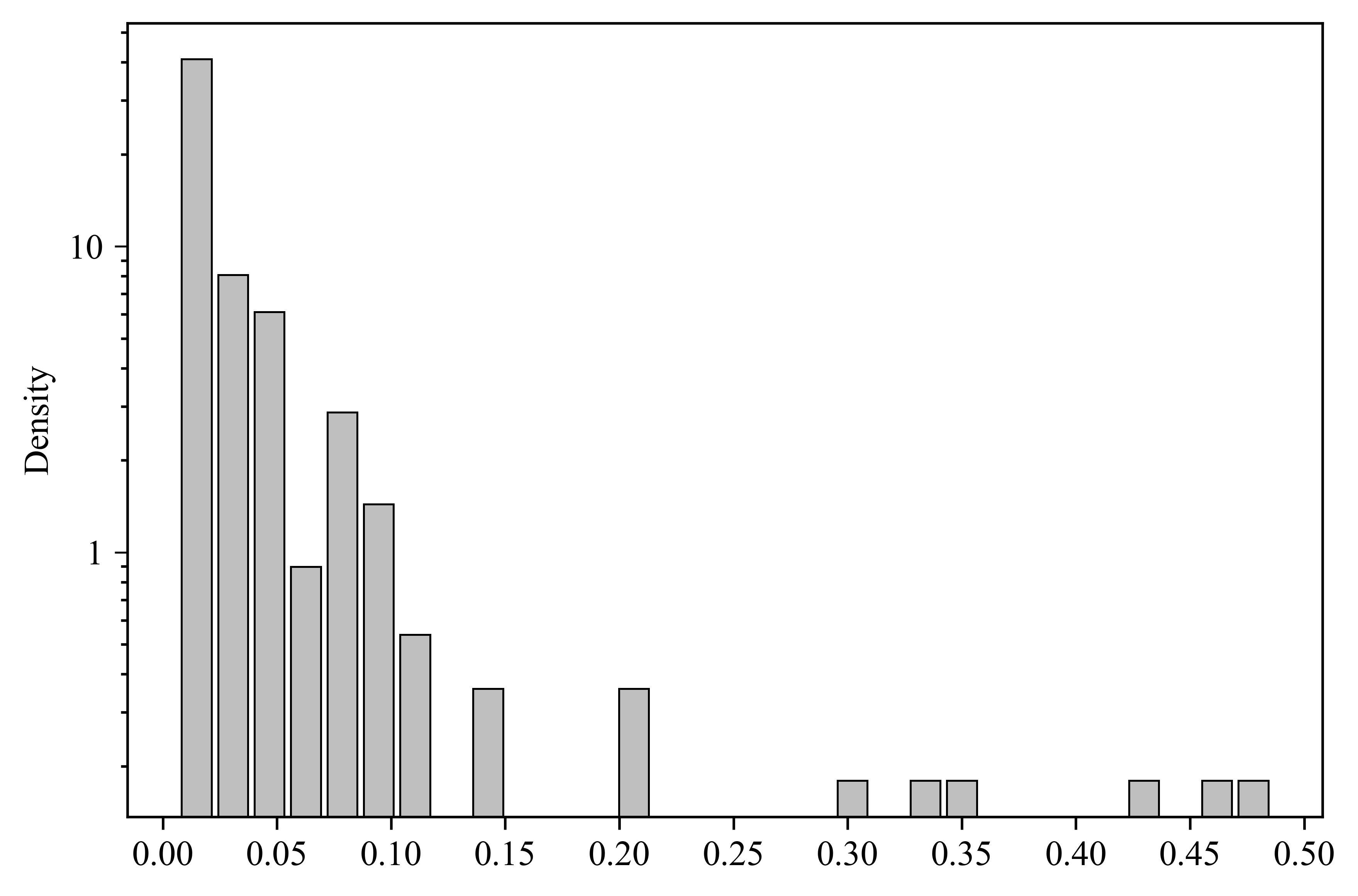}
	\caption{Histogram of normalized clique sizes $|C|/|V|$ across the six networks. The density axis is on a logarithmic scale to better illustrate the tail behaviour.}
	\label{fig:clique_hist}
\end{figure}

Figure~\ref{fig:clique_hist} shows a histogram of all normalized clique sizes aggregated across the six networks. The density axis is plotted on a logarithmic scale to better highlight the tail behaviour. The results indicate that over 90\% of the normalized cliques have a size smaller than 0.1, showing that most cliques in the decomposition are small relative to the full network. These small clique sizes directly contribute to efficient local computations during parameter learning. Only a small fraction of cliques are large, reflecting more complex substructures in some networks.

\subsection*{Treewidth Analysis}

Table~\ref{tab:treewidth_heuristics} reports the estimated treewidth of each network, computed using two standard elimination heuristics: \textit{minimum degree} and \textit{minimum fill-in} \cite{bodlaender2010treewidth}. The results show that all networks have relatively small treewidth, indicating that these real-world networks can be naturally decomposed into smaller subnetworks.  

This structural property allows parameter learning and probabilistic inference to be divided into largely independent subproblems. Consequently, parameters can be estimated and stored separately for each subnetwork. Table~\ref{tab:treewidth_heuristics} also reports the average speed-up ratios obtained by parallelized parameter learning compared with global parameter learning in Section~V-A. These results quantify the practical computational benefits of the proposed structural decomposition.

\begin{table}[htbp]
	\centering
	\caption{Treewidth Estimates and Speed-up Ratios for Networks Used in Parallelized Parameter Learning.}
	\label{tab:treewidth_heuristics}
	\begin{tabular}{lcccccc}
		\hline
		Network        & Child & Alarm & Hailfinder & Hepar2 & Win95pts & Pigs \\
		\hline
		Min-degree     & 3     & 4     & 4          & 6      & 8        & 11   \\
		Min-fill-in    & 3     & 4     & 4          & 6      & 8        & 10   \\
		Ratio          & 1.40& 1.81& 2.03     & 1.96 & 2.02   & 2.69 \\
		\hline
	\end{tabular}
\end{table}

\section{Condition Numbers of Covariance Matrices}\label{supp-con}

To assess potential numerical instability in the learned linear Gaussian Bayesian networks, we report the condition numbers of their covariance matrices. For each network, the condition number was computed after learning the network parameters.  

Experiments were conducted on four benchmark networks from the \textit{bnlearn} repository: \textit{ecoli70} (44 variables), \textit{magic-niab} (46 variables), \textit{magic-irri} (64 variables), and \textit{arth150} (107 variables). For each network and sample size, 100 independent datasets were generated from the original network. The parameters were then learned using the proposed decomposition method, and the condition numbers of the resulting covariance matrices were computed. Table~\ref{tab:cov_cond} reports the 5th and 95th percentiles of the condition numbers across these 100 repetitions, presented in the format \textbf{(5\%, 95\%)}.

\begin{table}[htbp]
	\centering
	\caption{Condition numbers (5\% and 95\% percentiles) of the covariance matrices for the learned networks.}
	\label{tab:cov_cond}
	\begin{tabular}{c|cccc}
		\toprule
		Sample Size & ecoli70 & magic-niab & magic-irri & arth150 \\
		\midrule
		500     & (38.90, 3095.70) & (39.88, 3643.56) & (55.55, 5494.25) & (84.56, 3737.75) \\
		1000    & (39.51, 1959.21) & (29.45, 3244.93) & (51.57, 2159.21) & (70.78, 5513.69) \\
		2500    & (38.17, 1760.05) & (34.84, 1067.35) & (55.08, 2558.71) & (117.42, 7435.66) \\
		5000    & (29.76, 980.95)  & (37.99, 836.12)  & (48.23, 1249.18) & (73.31, 4558.45) \\
		7500    & (32.10, 1822.79) & (31.46, 2183.49) & (59.05, 1976.73) & (104.23, 6284.86) \\
		10000   & (37.77, 1533.57) & (37.33, 1443.18) & (44.49, 4280.49) & (75.40, 9266.79) \\
		\bottomrule
	\end{tabular}
\end{table}

As shown in Table~\ref{tab:cov_cond}, the condition numbers remain moderate for most networks and sample sizes, indicating that the learned covariance matrices are generally well-conditioned. For some networks, particularly \textit{arth150}, the upper percentiles at larger sample sizes are relatively high, reflecting variability due to network structure and finite-sample estimation. Nevertheless, even these extreme values remain within a range that permits stable linear Gaussian inference. These results provide quantitative evidence that the proposed parameter learning procedure produces covariance matrices suitable for reliable numerical computation across repeated experiments.

\end{document}